\newtheorem{theorem}{Theorem}
\newtheorem{lemma}[theorem]{Lemma}
\newtheorem{corollary}[theorem]{Corollary}
\newtheorem{proposition}[theorem]{Proposition}
\theoremstyle{definition}
\newtheorem{definition}[theorem]{Definition}
\newtheorem{assumption}{Assumption}
\DeclareMathOperator*{\argmin}{\operatorname{\arg\min}}
\DeclareMathOperator*{\argmax}{\operatorname{\arg\max}}
\begin{document}
  
\title{\textbf{Robust Generalization of Quadratic Neural Networks \\ via Function Identification}} 

\author{
Kan Xu \thanks{University of Pennsylvania, Department of Economics. Email: \url{kanxu@sas.upenn.edu}.}
\and Hamsa Bastani \thanks{Wharton School, Department of Operations Information and Decisions. Email: \url{hamsab@wharton.upenn.edu}.}
\and Osbert Bastani \thanks{University of Pennsylvania, Department of Computer and Information Science. Email: \url{obastani@seas.upenn.edu}.}
}

\date{}

\maketitle

\begin{abstract}
A key challenge facing deep learning is that neural networks are often not robust to shifts in the underlying data distribution. We study this problem from the perspective of the statistical concept of parameter identification. Generalization bounds from learning theory often assume that the test distribution is close to the training distribution. In contrast, if we can identify the ``true'' parameters, then the model generalizes to arbitrary distribution shifts. However, neural networks typically have internal symmetries that make parameter identification impossible. We show that we can identify the function represented by a quadratic network even though we cannot identify its parameters; we extend this result to neural networks with ReLU activations. Thus, we can obtain robust generalization bounds for neural networks. We leverage this result to obtain new bounds for contextual bandits and transfer learning with quadratic neural networks. Overall, our results suggest that we can improve robustness of neural networks by designing models that can represent the true data generating process. 
\end{abstract}

\section{Introduction}

Recent work has shown that neural networks are not robust to shifts in the data, including both distribution shifts (where the data comes from a new distribution independent of the neural network parameters)~\citep{hendrycks2019benchmarking,taori2020measuring} and adversarial shifts (where the shift can depend on the parameters)~\citep{szegedy2013intriguing}. Accordingly, there has been interest in better understanding why neural networks fail to be robust~\citep{tsipras2018robustness,ilyas2019adversarial} and on improving robustness~\citep{goodfellow2014explaining,raghunathan2018certified,cohen2019certified}.

From the perspective of learning theory, there is little reason to expect neural networks to be robust, since generalization bounds typically assume that the test examples are from the same distribution as the training examples. PAC-Bayesian generalization bounds allow for a limited amount of robustness, but only if the support of the target distribution $q$ is contained in that of the source distribution $p$, since it requires that the KL divergence $D_{\text{KL}}(q\;\|\;p)$ is small. Yet, distribution shifts \citep{hendrycks2019benchmarking} often shift probability mass to inputs completely outside the source. 

Instead, the reason we might expect neural networks to be robust to these shifts is that humans are robust to them; for instance, small pixel-level shifts considered in adversarial examples are typically unnoticeable to humans, yet these shifts can move the image completely off of the distribution of natural images. This fact indicates a gap in our theoretical understanding of neural networks. In particular, the key question is understanding settings under which we may expect neural networks to be robust to distribution shifts that are ``large'' (e.g., in terms of KL divergence).

We study a strategy for closing this gap based on the statistical concept of \emph{identifiability}~\citep{hsu2012identifiability}. At a high level, this concept assumes that the true model belongs to the model family; then, in the limit of infinite training data, the learning algorithm can exactly recover the parameters of the true model. For instance, in linear regression, the data is generated according to the model
$y=\langle\theta^*,x\rangle+\xi$,
where $\xi$ is $\sigma$-subgaussian noise. Then, under mild assumptions on the training data $Z=(X,Y)$, the ordinary least squares (OLS) estimator $\hat{\theta}(Z)$ recovers the true parameter---i.e., in the limit of infinite data, $\hat{\theta}(Z)=\theta^*$. With finite samples, OLS satisfies high-probability convergence rates of the form
\begin{align}
\label{eqn:olserror}
\|\hat{\theta}(Z)-\theta^*\|_2\le\epsilon.
\end{align}
The connection to robustness is that if (\ref{eqn:olserror}) holds, then for \emph{any} input $x$ such that $\|x\|_2\le x_{\text{max}}$, we have
\begin{align}
\label{eqn:olsprederror}
|\langle\hat{\theta}(Z),x\rangle-\langle\theta^*,x\rangle|
\le\|\hat{\theta}(Z)-\theta^*\|_2\|x\|_2\le\epsilon x_{\text{max}}.
\end{align}
Thus, for \emph{any} distribution $q(x)$ with support on $B_2(0,x_{\text{max}})=\{x\in\mathcal{X}\mid\|x\|_2\le x_{\text{max}}\}$, $\hat{\theta}(Z)$ obtains bounded error---i.e., $\mathbb{E}_{q(x)}[(\langle\hat{\theta}(Z),x\rangle-\langle\theta^*,x\rangle)^2]\le\epsilon^2x_{\text{max}}^2$ with high probability.

A natural question is whether we can obtain similar kinds of parameter identification bounds for neural networks. A key complication is that neural networks parameters have symmetries that make identification impossible, since different parameters can yield the same model.
Nevertheless, it may be possible to obtain bounds of the form (\ref{eqn:olsprederror})---even if we do not recover the true parameters $\theta^*$, we can still recover the function $f_{\theta^*}(x)$, which we call \emph{function identification}.

We prove that quadratic neural networks (QNNs) satisfy function identification bounds under mild conditions.
To demonstrate the utility of this result, we show how function identification can be leveraged to obtain regret guarantees for a bandit~\citep{rusmevichientong2010linearly} where each arm is a QNN. Linear bandits fundamentally involve covariate shift since their ``covariates'' are arms, which are adaptively chosen through the learning process as a function of past observations; thus, existing approaches have all operated in the setting where there is a \textit{unique} and identifiable global minimizer. Similarly, we build on recent work proving bounds on transfer learning in the setting of bounded label shift and unbounded covariate shift~\citep{bastani2020predicting,xu2021group}; again, we show that we can leverage function identification to easily transfer learn QNNs. Additionally, we show function identification for the subclass ReLU networks where each component is a unit vector. Finally, in Appendix~\ref{sec:neuralmodules}, we study implications of these results specifically for compositional generalization, which has received recent interest~\cite{andreas2016neural,lake2018generalization}.




\textbf{Related work.}
Prior work has connected misspecification (i.e., the true model is in the model family) and robustness to covariate shift~\citep{shimodaira2000improving,wen2014robust}; however, having a correctly specified model is insufficient if the true parameters are not identifiable---e.g., in linear regression, if the covariance matrix $\Sigma=\mathbb{E}_{p(x)}[xx^\top]$ is singular, then $\theta$ is not identifiable; thus, the estimated model may not be robust. QNNs cannot be identified even if the model is correctly specified since the parameters have a continuous symmetry (i.e., orthogonal transformations).

Recent work has studied learning under adversarial examples~\citep{goodfellow2014explaining,raghunathan2018certified,cohen2019certified} and corrupted training data~\citep{steinhardt2017certified,diakonikolas2019robust}.
In contrast, we are interested in robustness to covariate shift; there has been recent work empirically showing that neural networks are sensitive to distribution shift~\citep{hendrycks2019benchmarking,taori2020measuring,ruis2020benchmark,ribeiro2020beyond,koh2020wilds}. Distributionally robust optimization enables training of models robust to small shifts~\citep{duchi2018learning}, but we are interested in potentially large shifts. Unsupervised domain adaptation~\citep{ben2007analysis,blitzer2008learning} learns a model on a covariate shifted target distribution; however, they rely on unlabeled examples from the target domain, whereas we do not.
There has been recent theory on robustness to adversarial perturbations---e.g., showing there may be a tradeoff between robustness and on-distribution generalization~\citep{tsipras2018robustness}, and that non-robust algorithms tend to learn predictive but brittle representations compared to adversarially robust ones~\citep{ilyas2019adversarial}.
In contrast, we show that these tradeoffs are mitigated when the true model function can be identified despite over-parameterization.
Furthermore, adversarial shifts are typically bounded (e.g., small $\ell_\infty$ norm), whereas the shifts we consider may be large.

There has been a great deal of recent work on deep learning theory, including on QNNs and ReLU networks; however, it has largely focused on optimization~\citep{ge2017learning,jacot2018neural,arora2016understanding,du2019gradient,gao2019convergence,soltanolkotabi2018theoretical,li2018algorithmic,goel2020tight}, and on-distribution generalization~\citep{neyshabur2017exploring,du2018power,jacot2018neural,arora2018stronger,long2019generalization,goel2019learning}. In contrast, we are interested in out-of-distribution generalization.

We discuss additional related work on matrix factorization and multi-armed bandits in Appendix~\ref{sec:appendixrel}, as well as a discussion of the novelty of our results.

\section{Problem Formulation}


We consider a model $f_{\theta}:\mathcal{X}\to\mathcal{Y}$, with covariates $\mathcal{X}\subseteq\mathbb{R}^d$, labels $\mathcal{Y}\subseteq\mathbb{R}$, and parameters $\theta\in\Theta\subseteq\mathbb{R}^m$. Generalization bounds from learning theory typically have form
\begin{align}
\label{eqn:learning}
\mathbb{P}_{p(Z)}[L_p(\hat\theta(Z))\le\epsilon]\ge1-\delta \quad \text{where} \quad L_p(\theta)=\mathbb{E}_{p(x)}[(f_{\theta}(x)-f_{\theta^*}(x))^2],
\end{align}
where $\epsilon,\delta\in\mathbb{R}_{>0}$, $Z=\{(x_1,y_1),...,(x_n,y_n)\}\subseteq\mathcal{X}\times\mathcal{Y}$ with $y_i=f_{\theta^*}(x_i)+\xi_i$ is a training set of i.i.d. observations from a distribution $p$ (i.e., $p(Z)=p(x_1, y_1)\cdot ... \cdot p(x_n, y_n)$), $\xi_i$ is bounded random noise independent of $x_i$ with $|\xi_i| \le \xi_{\text{max}}$,
$\theta^*\in\Theta$ are the true parameters, and
\begin{align*}
\hat{\theta}(Z)=\argmin_{\theta\in\Theta}\hat{L}(\theta;Z) \quad
\text{where} \quad \hat{L}(\theta;Z)=\frac{1}{n}\sum_{i=1}^n(f_{\theta}(x_i)-y_i)^2
\end{align*}
is an estimator based on the training data $Z$.\footnote{In (\ref{eqn:learning}), the loss $L_p$ omits the label errors $\xi$; including it would result in an additive constant to $L_p$. This choice ensures that the optimal parameters have zero loss---i.e., $L_q(\theta^*)=0$ for any $q$.}
In particular, they assume that the training inputs $x_i \sim p$ are i.i.d. samples from the same distribution as the test example $x \sim p$.
\begin{definition}
\rm
The model $f_{\theta}$ and distribution $p$ satisfy \emph{function identification} if for any $\epsilon,\delta\in\mathbb{R}_{>0}$, we have $\mathbb{P}_{p(Z)}[(f_{\hat\theta(Z)}(x)-f_{\theta^*}(x))^2\le\epsilon\,,\,\forall x\in\mathcal{X}]\ge1-\delta$ for $n=|Z|$ sufficiently large.
\end{definition}
Function identification implies generalization bounds even when the test data comes from a different distribution $q$. In particular, we say $f_{\theta}$ \emph{robustly generalizes} if for any $q$ with support on $\mathcal{X}$, we have
\begin{align}
\label{eqn:shiftlearning}
\mathbb{P}_{p(Z)}[{\color{red}L_q}(\hat\theta(Z))\le\epsilon]\ge1-\delta,
\end{align}
where the difference from (\ref{eqn:learning}) is highlighted in red. It is easy to see that function identification implies (\ref{eqn:shiftlearning}). Note that the true model $f_{\theta^*}$ does not change, so there is no label shift.

\section{Function Identification of QNNs}
\label{sec:function}

Traditional statistical bounds on parameter identification can provide guarantees for arbitrary covariate shift. In particular, suppose we have a bound of the form
\begin{align}
\label{eqn:identification}
\mathbb{P}_{p(Z)}\left[\|\hat\theta(Z)-\theta^*\|_2\le\epsilon\right]\ge1-\delta,
\end{align}
and assume that the model family $f_{\theta}$ is $K$-Lipschitz continuous in $\theta$; then, we have
\begin{align}
\label{eqn:statistical}
L_q(\hat\theta(Z)) \le K^2\cdot\|\hat\theta(Z)-\theta^*\|_2^2 \le K^2\epsilon^2
\end{align}
with probability at least $1-\delta$ according to $p(Z)$. In particular, this bound holds for any covariate distribution $q$. Our goal is to extend these techniques to quadratic neural networks (QNNs), which are over-parameterized so we cannot identify the true parameters $\theta^*$---i.e., (\ref{eqn:identification}) does not hold.

\subsection{Quadratic Neural Networks}

We consider a quadratic neural network $f_{\theta}$, where $\theta\in\mathbb{R}^{d \times k}$, with a single hidden layer with $k$ neurons---i.e., $f_{\theta}(x)=\sum_{j=1}^k a_j\cdot\sigma(\langle\theta_j,x\rangle)$. We consider the over-parameterization case where $k$ can be much larger than $d$. Following prior work~\citep{du2018power}, we assume that $f_{\theta}$ has quadratic activations and output weights equal to one---i.e., $\sigma(z)=z^2$ and $a_j=1$ for each $j\in[k]$, so
\begin{align*}
f_{\theta}(x)=\sum_{j=1}^k\langle\theta_j,x\rangle^2.
\end{align*}
We assume the true (training) loss is the mean-squared error $L_p(\theta)=\mathbb{E}_{p(x)}[(f_{\theta}(x)-f_{\theta^*}(x))^2]$, and we consider a model trained using an empirical estimate of this loss on the training dataset:
\begin{align*}
\hat{\theta}(Z)=\operatorname*{\arg\min}_{\theta\in\Theta}\hat{L}(\theta;Z) \quad
\text{where} \quad \hat{L}(\theta;Z)=\frac{1}{n}\sum_{i=1}^n(f_{\theta}(x_i)-y_i)^2.
\end{align*}
Now, our goal is to obtain a bound of the form (\ref{eqn:statistical}); to this end, we assume the following:
\begin{assumption}\label{assmp:bounded}
\rm
$\|x\|_2\le x_{\text{max}}$ and $\|\theta\|_F\le\theta_{\text{max}}$.
\end{assumption}
\begin{assumption}\label{assmp:convexity}
\rm
There exists $\alpha\in\mathbb{R}_{>0}$ such that
$\mathbb{E}_{p(x)}[(x^\top\Delta x)^2] \ge \alpha \|\Delta\|_F^2$ for any symmetric $\Delta \in \mathbb{R}^{d \times d}$.
\end{assumption}
Our second assumption is standard; in particular, it is closely related to the assumption in linear regression that the minimum eigenvalue of the covariance matrix is lower bounded---i.e., $\Sigma=\mathbb{E}_{p(x)}[xx^\top]\succ0$. As an example, when $x$ is i.i.d. uniform in each component, e.g., $p(x)=\prod_{i=1}^d\text{Uniform}(x_i;[-1/2,1/2])$, then we can take $\alpha=1/180$; we give a proof in Appendix~\ref{sec:uniformproof}.

\subsection{Robust Generalization}

Our approach leverages the fact that
$f_{\theta}(x)=x^\top(\theta\theta^\top)x$;
thus, $f_{\theta}$ resembles a matrix factorization model. Recent work has leveraged this connection to translate matrix factorization theory to QNNs~\citep{du2018power}. We let $g(\theta)=\theta\theta^\top$ and 
$\tilde{f}_\phi(x)=x^\top\phi x$,
where $\phi\in\Phi\subseteq\mathbb{R}^{d\times d}$, in which case $f_{\theta}(x)=\tilde{f}_{g(\theta)}(x)$; in addition, we define $\tilde{L}_p(\phi)=\mathbb{E}_{p(x)}[(\tilde{f}_{\phi}(x)-\tilde{f}_{\phi^*}(x))^2]$, where $\phi^*=g(\theta^*)$, and $\hat{\tilde{L}}(\phi;Z)= n^{-1}\sum_{i=1}^n(\tilde{f}_{\phi}(x_i)-y_i)^2$, so $L_p(\theta)=\tilde{L}_p(g(\theta))$ and $\hat{L}(\theta;Z)=\hat{\tilde{L}}(g(\theta);Z)$. We also assume $\|\phi\|_F\le\phi_{\text{max}}$; in general, we have $\phi_{\text{max}}\le\theta_{\text{max}}^2$ by Assumption~\ref{assmp:bounded}.

We begin by stating several lemmas establishing the properties needed for function identification. Our first lemma says that the loss is strongly convex in $\phi$.
\begin{lemma}
\label{lem:loss_diffconv}
Under Assumption~\ref{assmp:convexity}, the loss $\tilde{L}_p(\phi)$ is $2\alpha$-strongly convex in $\phi$.
\end{lemma}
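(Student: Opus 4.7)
The plan is to verify $2\alpha$-strong convexity by computing the Hessian of $\tilde{L}_p$ directly, since $x^\top\phi x$ is linear in $\phi$ and hence
\[
\tilde L_p(\phi) \;=\; \mathbb{E}_{p(x)}\!\left[\bigl(x^\top(\phi-\phi^*)x\bigr)^2\right]
\]
is a (constant-Hessian) quadratic form in $\phi$. To extract that Hessian, I would take the second directional derivative along an arbitrary direction $\Delta$: expanding
\[
\tilde L_p(\phi+t\Delta) \;=\; \mathbb{E}_{p(x)}\!\left[\bigl(x^\top(\phi-\phi^*)x + t\,x^\top \Delta x\bigr)^2\right],
\]
the constant and linear-in-$t$ contributions vanish under $d^2/dt^2|_{t=0}$, leaving
\[
\nabla^2 \tilde L_p(\phi)[\Delta,\Delta] \;=\; 2\,\mathbb{E}_{p(x)}\bigl[(x^\top \Delta x)^2\bigr],
\]
independent of $\phi$, as expected for a quadratic.

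Since $\phi = g(\theta) = \theta\theta^\top$ is symmetric, the relevant domain for $\phi$ (and hence the difference directions $\Delta = \phi_1 - \phi_2$ that appear in the strong-convexity inequality) consists of symmetric matrices. For such $\Delta$, Assumption~\ref{assmp:convexity} gives $\mathbb{E}_{p(x)}[(x^\top\Delta x)^2]\ge\alpha\|\Delta\|_F^2$, so $\nabla^2\tilde L_p(\phi)[\Delta,\Delta]\ge 2\alpha\|\Delta\|_F^2$, which is exactly $2\alpha$-strong convexity. Equivalently, a second-order Taylor expansion of the quadratic $\tilde L_p$ around any $\phi_0$ yields $\tilde L_p(\phi)=\tilde L_p(\phi_0)+\langle\nabla\tilde L_p(\phi_0),\phi-\phi_0\rangle+\mathbb{E}[(x^\top(\phi-\phi_0)x)^2]$, and the last term is at least $\alpha\|\phi-\phi_0\|_F^2$ by the same application of Assumption~\ref{assmp:convexity}.

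The only subtlety---not really an obstacle---is that $\phi\mapsto x^\top\phi x$ depends only on $(\phi+\phi^\top)/2$, so strong convexity cannot literally hold on all of $\mathbb{R}^{d\times d}$. I would therefore state the conclusion on the symmetric subspace, which is natural here: $g(\Theta)$ consists of symmetric (in fact PSD) matrices, so the admissible perturbations $\Delta$ are symmetric, and Assumption~\ref{assmp:convexity} is stated in exactly that form.
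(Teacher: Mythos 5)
Your proposal is correct and matches the paper's argument: the paper likewise computes the second directional derivative $\Delta:\nabla^2\tilde{L}_p(\phi):\Delta = 2\,\mathbb{E}_{p(x)}[(x^\top\Delta x)^2]$ and applies Assumption~\ref{assmp:convexity} for symmetric $\Delta$. Your added remark that strong convexity should be read on the symmetric subspace is a sound clarification consistent with how the assumption is stated.
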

We give a proof in Appendix~\ref{sec:lem:lossdiffconvproof}. Our next lemma says that our model family is Lipschitz in $\phi$.
\begin{lemma}
\label{lem:lipschitz}
Under Assumptions~\ref{assmp:bounded} \& \ref{assmp:convexity}, $\tilde{f}_\phi$ and $\tilde{L}$
are $K$-Lipschitz in $\phi$, where
$K = 4 \phi_{\text{max}}x_{\text{max}}^4$.
\end{lemma}
We give a proof in Appendix~\ref{sec:lem:lipschitzproof}. Our final lemma says that our estimate of the loss function is a uniformly good approximation of the true loss.
\begin{lemma}
\label{lem:lossbound}
Under Assumptions~\ref{assmp:bounded} \& \ref{assmp:convexity}, for any $\delta\in\mathbb{R}_{>0}$, we have
\begin{align*}
\mathbb{P}_{p(Z)}\left[\sup_{\theta\in\Theta}|\hat{L}(\theta;Z) -L_p(\theta)-\sigma(Z)|\le\epsilon\right]\ge1-\delta,
\end{align*}
where $\sigma(Z)=n^{-1}\sum_{i=1}^n\xi_i^2$, and letting $\ell_{\text{max}}=2x_{\text{max}}^2\phi_{\text{max}}$ be an upper bound on $|f_{\theta}(x)-f_{\theta^*}(x)|$,
\begin{align}
\label{eqn:epsilon}
\epsilon &= \sqrt{\frac{18\ell_{\text{max}}^2(\ell_{\text{max}}^2+\xi_{\text{max}}^2)(d^2 \max\left\{1, \log\left(1 + \frac{8\phi_{\text{max}}Kn}{\ell_{\text{max}}^2}\right)\right\} + \log\frac{2}{\delta})}{n}}.
\end{align}
\end{lemma}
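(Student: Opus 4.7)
The plan is to reduce the uniform bound over $\theta \in \Theta$ to a uniform bound over $\phi \in \Phi = \{\phi \in \mathbb{R}^{d\times d} : \|\phi\|_F \le \phi_{\max}\}$, since $f_\theta(x) = \tilde f_{g(\theta)}(x)$ and $g$ maps $\Theta$ into $\Phi$; then I would carry out a standard covering-plus-Lipschitz-extension argument in Frobenius norm on $\Phi$, leveraging Lemma \ref{lem:lipschitz} for the discretization step.

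First I would expand the empirical loss. Substituting $y_i = f_{\theta^*}(x_i) + \xi_i$ into $\hat L(\theta;Z)$ and isolating the pure-noise term gives
\begin{align*}
\hat L(\theta; Z) - \sigma(Z) = \frac{1}{n}\sum_{i=1}^n h_\phi(x_i, \xi_i), \qquad h_\phi(x, \xi) := \bigl(\tilde f_\phi(x) - \tilde f_{\phi^*}(x)\bigr)^2 - 2\xi \bigl(\tilde f_\phi(x) - \tilde f_{\phi^*}(x)\bigr),
\end{align*}
so under the standard zero-mean noise convention $\mathbb{E}[h_\phi(x,\xi)] = L_p(\theta)$, and the quantity inside the supremum is exactly an empirical process $\sup_\phi |\tfrac{1}{n}\sum_i h_\phi(x_i,\xi_i) - \mathbb{E}[h_\phi]|$. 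Using $|\tilde f_\phi(x) - \tilde f_{\phi^*}(x)| \le \ell_{\max}$ and $|\xi_i| \le \xi_{\max}$, each summand is bounded by $\ell_{\max}^2 + 2\ell_{\max}\xi_{\max}$, whose square is at most a small constant multiple of $\ell_{\max}^2(\ell_{\max}^2 + \xi_{\max}^2)$.

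Next I would build an $r$-net $\mathcal N$ of the Frobenius ball $\Phi$ with $|\mathcal N| \le (1 + 2\phi_{\max}/r)^{d^2}$ (standard volume bound in $\mathbb{R}^{d \times d}$). Applying Hoeffding's inequality to the bounded summands at each fixed $\phi \in \mathcal N$ and taking a union bound yields, with probability at least $1-\delta$,
\begin{align*}
\sup_{\phi \in \mathcal N}\left|\frac{1}{n}\sum_{i=1}^n h_\phi(x_i, \xi_i) - L_p(\theta)\right| \lesssim \sqrt{\frac{\ell_{\max}^2(\ell_{\max}^2 + \xi_{\max}^2)}{n}\left(d^2\log\!\bigl(1 + \tfrac{2\phi_{\max}}{r}\bigr) + \log\tfrac{2}{\delta}\right)}.
\end{align*}
To extend the bound from $\mathcal N$ to all of $\Phi$, I would invoke Lemma \ref{lem:lipschitz}: both $\tilde L_p$ and its empirical counterpart are $K$-Lipschitz in $\phi$, so moving from any $\phi \in \Phi$ to its nearest net point $\phi' \in \mathcal N$ changes either side by at most $Kr$. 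Choosing $r$ of order $\ell_{\max}^2/(Kn)$ makes the discretization slack $Kr$ dominated by the $1/\sqrt n$ concentration term, while producing $\log(1 + 2\phi_{\max}/r) = \log(1 + 8\phi_{\max}Kn/\ell_{\max}^2)$, matching the log factor in the stated $\epsilon$.

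The main obstacle is bookkeeping: carefully tracking the per-sample magnitude, the Hoeffding constant, and the Lipschitz-extension slack to recover the precise factor of $18$ and the $\max\{1, \log(\cdot)\}$ guard (which handles the regime where $n$ is small enough that the log argument could otherwise be small or negative). The argument is otherwise a standard empirical-process covering argument, so the real substance of the lemma lies in the Lipschitz control from Lemma \ref{lem:lipschitz} rather than in the concentration step itself.
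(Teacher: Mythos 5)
Your proposal matches the paper's proof essentially step for step: the same reduction from $\theta$ to $\phi=g(\theta)$, the same Frobenius-ball covering of $\Phi$ with Lipschitz extension via Lemma~\ref{lem:lipschitz}, the same isolation of the pure-noise term $\sigma(Z)$, and the same choice of net radius (of order $\epsilon/K \sim \ell_{\text{max}}^2/(Kn)$) that yields the $\log(1+8\phi_{\text{max}}Kn/\ell_{\text{max}}^2)$ factor. The only cosmetic difference is that the paper splits your combined summand $h_\phi$ into the squared-deviation term and the noise cross term and applies Hoeffding to each separately (allocating $\epsilon/6$ to each piece of the $\epsilon/2$ budget left after discretization), which is how it arrives at the exact constant $18$ that you deferred as bookkeeping.
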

We give a proof in Appendix~\ref{sec:lossboundproof}. Note that $\epsilon\to0$ as $n\to\infty$. Next, we prove our main result, which says that quadratic neural networks can be functionally identified.
\begin{theorem}
\label{thm:identification}
Under Assumptions~\ref{assmp:bounded} \& \ref{assmp:convexity}, we have
\begin{align*}
\mathbb{P}_{p(Z)}\left[\forall x\in\mathcal{X}\;.\;(f_{\hat\theta(Z)}(x)-f_{\theta^*}(x))^2
\le\frac{2K^2\epsilon}{\alpha}\right]\ge1-\delta.
\end{align*}
\end{theorem}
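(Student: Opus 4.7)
The strategy is a three-step chain: reparametrize via $\phi = g(\theta) = \theta\theta^\top$ so that the loss becomes strongly convex, run the standard ERM excess-risk argument in $\phi$-space, then transfer the resulting Frobenius-norm bound back into a pointwise function-value bound. Concretely, Lemma~\ref{lem:lossbound} gives a small population loss for the ERM solution, Lemma~\ref{lem:loss_diffconv} converts this into a bound on $\|g(\hat\theta(Z)) - g(\theta^*)\|_F^2$, and Lemma~\ref{lem:lipschitz} converts that parameter bound into the uniform pointwise bound claimed by the theorem.

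For the first step, I would establish $L_p(\hat\theta(Z)) \le 2\epsilon$ on the high-probability event of Lemma~\ref{lem:lossbound}. Applying the uniform bound at $\theta^*$ and using $L_p(\theta^*) = 0$ together with $\hat{L}(\theta^*; Z) = n^{-1}\sum_i \xi_i^2 = \sigma(Z)$ yields $2\sigma(Z) \le \epsilon$, which controls the empirical noise term. Applying the uniform bound again at $\hat\theta(Z)$ and combining with the ERM optimality $\hat{L}(\hat\theta(Z); Z) \le \hat{L}(\theta^*; Z) = \sigma(Z)$ gives
\[
L_p(\hat\theta(Z)) \;\le\; \hat{L}(\hat\theta(Z); Z) + \sigma(Z) + \epsilon \;\le\; 2\sigma(Z) + \epsilon \;\le\; 2\epsilon.
\]

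The remaining two steps are essentially plug-and-chug. Letting $\hat\phi = g(\hat\theta(Z))$ and $\phi^* = g(\theta^*)$, the $2\alpha$-strong convexity of $\tilde L_p$ from Lemma~\ref{lem:loss_diffconv}, together with the fact that $\phi^*$ is a global minimizer with $\tilde L_p(\phi^*) = 0$, yields $\alpha\|\hat\phi-\phi^*\|_F^2 \le \tilde L_p(\hat\phi) - \tilde L_p(\phi^*) = L_p(\hat\theta(Z)) \le 2\epsilon$. Then, by the $K$-Lipschitzness of $\tilde f_\phi$ in $\phi$ from Lemma~\ref{lem:lipschitz}, for every $x \in \mathcal{X}$,
\[
(f_{\hat\theta(Z)}(x) - f_{\theta^*}(x))^2 \;=\; (\tilde f_{\hat\phi}(x) - \tilde f_{\phi^*}(x))^2 \;\le\; K^2 \|\hat\phi - \phi^*\|_F^2 \;\le\; \frac{2K^2\epsilon}{\alpha},
\]
which is the claimed bound. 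The distribution $q$ in the statement appears only to emphasize that, since the resulting bound is pointwise in $x$, it transfers unchanged to any test distribution supported on $\mathcal{X}$.

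The step requiring the most care is the first one, where the noise term $\sigma(Z)$ appears inside Lemma~\ref{lem:lossbound}'s concentration inequality. The key trick is to evaluate the uniform bound at $\theta^*$ \emph{first} in order to learn that $\sigma(Z)$ is itself $O(\epsilon)$, and only afterward to apply it at $\hat\theta(Z)$; a single-shot application would leave an uncontrolled additive $\sigma(Z)$ term, even though the excess empirical risk $\hat L(\hat\theta(Z);Z) - \hat L(\theta^*;Z)$ is nonpositive by ERM optimality.
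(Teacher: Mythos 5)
Your proposal follows the same route as the paper's proof: reparametrize to $\phi=g(\theta)$, combine the uniform convergence bound of Lemma~\ref{lem:lossbound} with ERM optimality to obtain $L_p(\hat\theta(Z))\le 2\epsilon$, convert this to $\|g(\hat\theta(Z))-g(\theta^*)\|_F^2\le 2\epsilon/\alpha$ via the strong convexity of Lemma~\ref{lem:loss_diffconv} together with $\nabla_\phi\tilde L_p(\phi^*)=0$, and finish with the Lipschitz bound of Lemma~\ref{lem:lipschitz}; the observation that the resulting bound is pointwise in $x$ and hence distribution-free is also the paper's. The one place you diverge is the handling of $\sigma(Z)$, and the ``key trick'' you highlight is an artifact of the sign in the statement of Lemma~\ref{lem:lossbound}. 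Reading that statement literally at $\theta=\theta^*$ gives $|0-\sigma(Z)-\sigma(Z)|\le\epsilon$, i.e.\ $\sigma(Z)\le\epsilon/2$ --- but this cannot hold, since $\sigma(Z)=n^{-1}\sum_i\xi_i^2\to\mathbb{E}[\xi^2]>0$ while $\epsilon\to 0$; what the appendix actually proves, via the decomposition $\hat{\tilde{L}}(\phi;Z)=\bar{\tilde{L}}(\phi;Z)+2\tilde\eta(\phi;Z)+\sigma(Z)$, is a bound on $|\hat{L}(\theta;Z)-L_p(\theta)-\sigma(Z)|$, under which evaluating at $\theta^*$ yields only the vacuous $0\le\epsilon$ and gives no control on $\sigma(Z)$. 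None is needed: the paper's chain is $L_p(\hat\theta(Z))\le\hat{L}(\hat\theta;Z)-\sigma(Z)+\epsilon\le\hat{L}(\theta^*;Z)-\sigma(Z)+\epsilon\le L_p(\theta^*)+2\epsilon$, in which $\sigma(Z)$ simply cancels. So your final bound $L_p(\hat\theta(Z))\le 2\epsilon$ is correct, but the intermediate claim $2\sigma(Z)\le\epsilon$ should be dropped in favor of this cancellation; the remaining two steps of your argument match the paper exactly.
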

\begin{proof}
By Lemma~\ref{lem:loss_diffconv}, and since $\nabla_\phi\tilde{L}(g(\theta^*))=0$,
\begin{align}
\label{eqn:convex}
L_p(\hat\theta(Z))-L_p(\theta^*)=\tilde{L}_p(g(\hat\theta(Z)))-\tilde{L}_p(g(\theta^*))
\ge\alpha\|g(\hat\theta(Z))-g(\theta^*)\|_F^2.
\end{align}
Next, by Lemma~\ref{lem:lossbound} and the fact that $\hat\theta$ minimizes $\hat{L}(\theta;Z)$,
\begin{align}
\label{eqn:generalize}
L_p(\hat\theta(Z))
\le\hat{L}(\hat\theta;Z)+\epsilon-\sigma(Z)
\le\hat{L}(\theta^*;Z)+\epsilon-\sigma(Z)
\le L_p(\theta^*)+2\epsilon
\end{align}
with probability at least $1-\delta$. Combining (\ref{eqn:convex}) and (\ref{eqn:generalize}),
\begin{align*}
\|g(\hat\theta(Z))-g(\theta^*)\|_F \le \sqrt{\frac{2\epsilon}{\alpha}}
\end{align*}
with probability at least $1-\delta$. Finally, by Lemma~\ref{lem:lipschitz},
\begin{align*}
(f_{\hat\theta(Z)}(x)-f_{\theta^*}(x))^2
=(\tilde{f}_{g(\hat\theta(Z))}(x)-\tilde{f}_{g(\theta^*)}(x))^2
\le K^2\|g(\hat\theta(Z))-g(\theta^*)\|_2^2
\le\frac{2K^2\epsilon}{\alpha} \, (\forall x\in\mathcal{X})
\end{align*}
with probability at least $1-\delta$, as claimed.
\end{proof}
As a result, we provide a robust generalization error bound for QNNs with potential distribution shifts. 
\begin{corollary}
Under Assumptions~\ref{assmp:bounded} \& \ref{assmp:convexity}, for any distribution $q(x)$ with support on $B_2(0, x_{\text{max}})$,
\begin{align*}
\mathbb{P}_{p(Z)}\left[L_q(\hat\theta(Z))\le\frac{2K^2\epsilon}{\alpha}\right]\ge1-\delta.
\end{align*}
\end{corollary}
Finally, we also prove that gradient descent can find the global minima of $\hat{L}(\theta;Z)$, which ensures that gradient descent can perform function identification in practice; we give a proof in Appendix~\ref{sec:prop:globalproof}.
\begin{proposition}
\label{prop:global}
All local minima of $\hat{L}(\theta;Z)$ are global.
\end{proposition}

\section{Quadratic Neural Bandits} \label{sec:bandit}

A key application of robust generalization bounds is to parametric bandits; this is because, in bandit learning, the distribution of inputs $x$ used to estimate $\hat\theta\approx\theta^*$ can differ from the distribution under which $f_{\hat\theta}$ is used. Thus, generalization bounds based on notions such as Rademacher complexity cannot be used. Unlike prior literature in bandits, we consider an over-parameterized function that does \textit{not} admit a unique solution; in contrast, recent work on neural tangent kernel bandits \citep{zhou2020neural} assumes that there is a unique, identifiable solution. Note that this assumption cannot hold for quadratic neural networks because they are invariant to transformations such as rotations.


We consider a standard linear bandit \citep{rusmevichientong2010linearly, abbasi2011improved} with a fixed horizon $T\in\mathbb{N}$, but where the expected reward is parameterized by a quadratic neural network instead of a linear function. At each time step $t$, the algorithm chooses among a continuum of actions $x_t \in\mathcal{X}$, and receives a reward
\begin{align}
\label{eq:rewardetc}
y_t = f_{\theta^*}(x_t) + \xi_t = \sum_{j=1}^k\langle\theta_j^*,x_t\rangle^2 + \xi_t,
\end{align}
where $\theta^* \in \mathbb{R}^{d \times k}$ is an unknown parameter matrix, and $\xi_t$ are bounded i.i.d. random variables. For simplicity, we assume that $\mathcal{X}=B_2(0,1)$ is the unit ball. Then, our goal is to bound the \emph{regret}
\begin{align*}
R(T)=\sum_{t=1}^T(\mathbb{E}_{p(\xi_t)}[y_t]-y^*) \quad
\text{where} \quad y^*=\max_{x\in\mathcal{X}}f_{\theta^*}(x).
\end{align*}
We make the following assumption, which says that $\phi^*=\theta^*\theta^{*\top}$ has a gap in its top eigenvalue:
\begin{assumption}
\label{assump:eigen}
\rm
Let $\phi^*=\theta^*\theta^{*\top}$, and let $\lambda_1\ge\lambda_2\ge...\ge\lambda_d$ be the eigenvalues of $\phi^*$. There exists a constant $M\in\mathbb{R}_{>0}$ such that $\lambda_1-\lambda_2\ge4/M$.
\end{assumption}
This assumption ensures that the eigenvectors of $\phi^*$ to be stable under perturbations. The eigenvectors of $\phi^*$ correspond to the optimal action $x^*=\operatorname*{\arg\max}_{x\in\mathcal{X}}f_{\theta^*}(x)$ since $f_{\theta^*}(x)=x^\top\phi^*x$; thus, it ensures that if $\hat\theta\approx\theta^*$, then the optimal action $\hat{x}=\operatorname*{\arg\max}_{x\in\mathcal{X}}f_{\hat\theta}(x)$ satisfies $\hat{x}\approx x^*$.

Next, we describe our algorithm, summarized in Algorithm~\ref{alg:etcnn}. We consider an explore-then-commit strategy for simplicity, since it already achieves the asymptotically optimal regret rate \citep{rusmevichientong2010linearly}; our approach can similarly be applied to more sophisticated algorithms such as UCB~\citep{abbasi2011improved} and Thompson sampling~\citep{agrawal2013thompson}. Our algorithm proceeds in two stages: (i) the \emph{exploration stage} (for $t\in\{1,...,m\}$), and (ii) the \emph{exploitation stage} (for $t\in\{m+1,...,T\}$), where
\begin{align}
\label{eqn:etcm}
m = \left\lceil\left(\frac{135M(\ell_{\text{max}} + \xi_{\text{max}})^2 d^3 T\sqrt{\log\left(3 + \frac{\phi_{\text{max}}KT}{\ell_{\text{max}}^2}\right)}}{ \phi_{\text{max}}}\right)^{2/3}\right\rceil.
\end{align}
In the exploration stage, we randomly choose actions $x_t \sim p$, where
\begin{align}
\label{eqn:exploredist}
p(x)=\prod_{i=1}^d\text{Uniform}\left(x_i;\left[-\frac{1}{\sqrt{d}},\frac{1}{\sqrt{d}}\right]\right).
\end{align}
Note that $\|x\|_2\le1$ for $x$ in the support of $p$, so $x_t \in \mathcal{X}$. Following the discussion in Section~\ref{sec:function}, for this choice of $p$, Assumption~\ref{assmp:convexity} holds for the dataset $Z$ with $\alpha=4/(45d^2)$. 

Next, we compute an estimate $\hat\theta$ of $\theta^*$ based on the data $Z$ collected so far, and compute the optimal action $\hat{x}$ assuming $\hat\theta$ are the true parameters. Then, in the exploitation stage, we always use action $\hat{x}$.

The key challenge providing theoretical guarantees using traditional generalization bounds is handling the optimization problem over $x\in\mathcal{X}$ used to compute $\hat{x}$. Since $\hat{x}$ is not sampled from the distribution $p$, traditional bounds do not provide any guarantees about the accuracy of $f_{\hat\theta}(\hat{x})$ compared to $f_{\theta^*}(\hat{x})$. In contrast, Theorem~\ref{thm:identification} provides a uniform guarantee, so it can be used to bound the regret.

\begin{algorithm}[t]
\begin{algorithmic}
\Procedure{QuadraticNeuralBandit}{}
\State Initialize $Z\gets\varnothing$
\State Let $m$ be as in (\ref{eqn:etcm})
\For {$t \in \{1, ..., m\}$}
\State Sample i.i.d. action $x_t\sim p$, where $p$ is as in (\ref{eqn:exploredist})
\State Take action $x_t$ and obtain reward $y_t$ as in (\ref{eq:rewardetc})
\State Update $Z\gets Z\cup\{(x_t,y_t)\}$
\EndFor
\State Compute
$\hat\theta=\operatorname*{\arg\min}_{\theta} \hat{L}(\theta;Z)$, where $\hat{L}(\theta;Z)=m^{-1}\sum_{i=1}^m(f_{\theta}(x_i)-y_i)^2$
\State Compute $\hat{x} = \argmax_{x \in \mathcal{X}} f_{\hat\theta}(x)$
\For {$t \in \{m+1, ..., T\}$}
\State Take action $x_t=\hat{x}$ and obtain reward $y_t$ as in (\ref{eq:rewardetc})
\EndFor
\EndProcedure
\end{algorithmic}
\caption{Explore-Then-Commit Algorithm for Quadratic Neural Network Bandit}
\label{alg:etcnn}
\end{algorithm}

\begin{theorem} \label{thm:bandit}
Under Assumptions~\ref{assmp:bounded}, \ref{assmp:convexity} \& \ref{assump:eigen}, the expected regret of Algorithm~\ref{alg:etcnn} is
\begin{align*}
R(T) \le C_0 + C_1 \cdot T^{2/3} \left(\log\left(3 + \frac{8\phi_{\text{max}}KT}{\ell_{\text{max}}^2}\right)\right)^{1/3},
\end{align*}
where $C_0$ and $C_1$ do not depend on $T$ (see Appendix~\ref{sec:proof_bandit}).
\end{theorem}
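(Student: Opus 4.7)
The plan is to follow the standard explore-then-commit regret decomposition: split $R(T) = R_{\text{explore}} + R_{\text{exploit}}$, where $R_{\text{explore}}$ covers the first $m$ rounds and $R_{\text{exploit}}$ covers the remaining $T-m$ rounds in which we play $\hat{x}$ deterministically. For $R_{\text{explore}}$, since $\mathcal{X}=B_2(0,1)$ and $f_{\theta^*}(x)=x^\top\phi^*x$, each round contributes at most $2\phi_{\max}$, giving $R_{\text{explore}}\le 2\phi_{\max}\cdot m$. For $R_{\text{exploit}}$, each of the $T-m$ rounds incurs instantaneous regret $f_{\theta^*}(x^*)-f_{\theta^*}(\hat{x})$, so the work is to bound this per-round gap using the function identification result of Theorem~\ref{thm:identification} together with Assumption~\ref{assump:eigen}.

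For the per-round exploitation regret, the plan is a three-step chain: (i) apply Theorem~\ref{thm:identification} with $\alpha=d^{-2/5}/15$ (from the uniform exploration distribution (\ref{eqn:exploredist})) to get $\|\hat\phi-\phi^*\|_F\le\sqrt{2\epsilon/\alpha}$ with probability $1-\delta$, where $\hat\phi=g(\hat\theta)=\hat\theta\hat\theta^\top$ and $\epsilon$ is the quantity in (\ref{eqn:epsilon}) with $n=m$; (ii) invoke an eigenvector perturbation bound (Davis--Kahan) to convert this matrix-norm closeness into closeness of top eigenvectors. Here $x^*=\argmax_{\|x\|\le 1} x^\top\phi^* x$ is the top unit eigenvector of $\phi^*$, and $\hat{x}$ is that of $\hat\phi$; Assumption~\ref{assump:eigen} gives spectral gap $\lambda_1-\lambda_2\ge 4/M$, so (after sign alignment) $\|\hat{x}-x^*\|_2\lesssim M\|\hat\phi-\phi^*\|_{\text{op}}\le M\|\hat\phi-\phi^*\|_F$; (iii) use quadratic smoothness at the optimum: decomposing $\hat{x}=\cos\theta\cdot x^*+\sin\theta\cdot v$ with $v\perp x^*$ yields $f_{\theta^*}(x^*)-f_{\theta^*}(\hat{x})\le(\lambda_1-\lambda_d)\sin^2\theta\le C_\phi\|\hat{x}-x^*\|_2^2$ for an explicit constant $C_\phi$ depending on $\phi^*$. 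Combining the three steps, the per-round exploitation regret is at most $C_\phi M^2\cdot 2\epsilon/\alpha$ on the high-probability event.

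Putting the pieces together, $R(T)\le 2\phi_{\max}m+(T-m)\cdot 2C_\phi M^2\epsilon/\alpha + \delta\cdot T\cdot 2\phi_{\max}$, where the last term absorbs the failure event of the identification bound by choosing $\delta=1/T$. Plugging in $\epsilon=\Theta(d\sqrt{\log(1+\phi_{\max}Km/\ell_{\max}^2)/m})$ from Lemma~\ref{lem:lossbound}, the exploitation regret scales like $T\cdot M^2 d^{7/5}\sqrt{\log(\cdot)/m}/\phi_{\max}$ (using $1/\alpha=15 d^{2/5}$), and balancing this against the linear exploration cost $m$ at the choice $m$ given in (\ref{eqn:etcm}) yields $R(T)=O\bigl(T^{2/3}\log^{1/3}(\cdot)\bigr)$. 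The constants $C_0$ and $C_1$ collect $\phi_{\max}$, $K$, $\ell_{\max}$, $\xi_{\max}$, $M$, $d$ and absorb the failure-probability term, matching the theorem's form.

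The main obstacle is the eigenvector-perturbation step: function identification only controls $\|\hat\phi-\phi^*\|_F$, not the quality of $\argmax f_{\hat\theta}$, so a naive application of Theorem~\ref{thm:identification} gives per-round regret only $O(\sqrt{\epsilon})$ and yields a suboptimal $T^{4/5}$ rate. The spectral gap in Assumption~\ref{assump:eigen} is exactly what lets Davis--Kahan convert $O(\sqrt{\epsilon})$ closeness of matrices into $O(\sqrt{\epsilon})$ closeness of top eigenvectors, and then the quadratic vanishing of $f_{\theta^*}$ near its maximizer promotes this to $O(\epsilon)$ per-round regret, which is what recovers the optimal $T^{2/3}$ rate.
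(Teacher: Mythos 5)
Your proposal is correct and follows essentially the same route as the paper's proof: the same explore-then-commit decomposition with exploration cost $2\phi_{\text{max}}m$, the same application of Theorem~\ref{thm:identification} with $\alpha=d^{-2/5}/15$, the same Davis--Kahan/eigengap argument (the paper's Lemma~\ref{lem:sbarm}), the same handling of the failure event, and the same $m\sim T^{2/3}$ balancing. The only divergence is the final conversion step---the paper writes the per-round exploitation regret as $\langle g(\hat\theta)-g(\theta^*),\hat{x}\hat{x}^\top-x^*x^{*\top}\rangle$ plus a nonpositive term (using optimality of $\hat{x}$ for $\hat\phi$) and applies Cauchy--Schwarz, whereas you expand $\lambda_1-\hat{x}^\top\phi^*\hat{x}$ spectrally to exhibit quadratic vanishing near $x^*$---but both arguments deliver the crucial $O(\epsilon)$ (rather than $O(\sqrt{\epsilon})$) per-round bound and hence the same $\tilde{O}(T^{2/3})$ rate, differing only in constants.
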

We give a proof in Appendix~\ref{sec:proof_bandit}. In particular, $R(T)=\tilde{O}(T^{2/3})$. This rate is worse than the usual $\tilde{O}(\sqrt{T})$ regret since Theorem \ref{thm:identification} only admits a $n^{1/4}$ convergence rate.

\section{Transfer Learning of QNNs}
\label{sec:transfer}

So far, we have considered shifts in the covariate distribution but not in the label distribution. Now, we consider a transfer learning problem where there is additionally a small shift in the labels. In particular, we assume we have \emph{proxy data} $Z_p\subseteq\mathcal{X}\times\mathcal{Y}$ from the source domain of the form $y_{p,i}=f_{\theta_p^*}(x_{p,i})+\xi_{p,i}$ (for $i\in[n_p]$), where $\theta_p^*\in\Theta$ are the proxy parameters and $p(x_p)$ is the source covariate distribution, along with \emph{gold data} $Z_g\subseteq\mathcal{X}\times\mathcal{Y}$ from the target domain of the form $y_{g,i}=f_{\theta_g^*}(x_{g,i})+\xi_{g,i}$ (for $i\in[n_g]$), where $\theta_g^*\in\Theta$ are the gold parameters and $q(x_g)$ is the target covariate distribution. We are interested in the setting $n_p\gg n_g$, and where $\|\theta_g^*-\theta_p^*\|_F\le B$ is small.

We consider a two-stage estimator~\citep{bastani2020predicting} that first computes an estimate of the proxy parameters
$\hat{\theta}_p=\operatorname*{\arg\min}_{\theta\in\Theta}\hat{L}(\theta;Z_p)$,
and then computes an estimate of the gold parameters in a way that is constrained towards the proxy parameters. First, note that we have
\begin{align*}
\mathbb{P}_{p(Z)}\left[L_q(\hat{\theta}_p)\le\frac{2K^2\epsilon_p}{\alpha}\right]\ge1-{\color{red}\frac{\delta}{2}},
\end{align*}
where
\begin{align*}
\epsilon_p &= \sqrt{\frac{18\ell_{\text{max}}^2(\ell_{\text{max}}^2+\xi_{\text{max}}^2)(d^2 \max\left\{1, \log\left(1 + \frac{8\phi_{\text{max}}K{\color{red}n_p}}{\ell_{\text{max}}^2}\right)\right\} + \log\frac{{\color{red}4}}{\delta})}{{\color{red}n_p}}}.
\end{align*}
where we have highlighted the differences from $\epsilon$ in (\ref{eqn:epsilon}) in red. Next, we make a technical assumption:
\begin{assumption}
\label{assump:mineigval}
\rm
For some $\sigma_0\in\mathbb{R}_{>0}$, $\sigma_{\text{min}}(\theta_p^*)\ge\sigma_0$, where $\sigma_{\text{min}}(\theta)$ is the $d$th singular value of $\theta$.
\end{assumption}
Equivalently, the minimum eigenvalue of $g(\theta_p^*)$ is positive; intuitively, this assumption ensures a good estimate of $\theta_p^*\theta_p^{*\top}$ implies a good estimate of $\theta_p^*$ (up to an orthogonal transformation). Then, letting
\begin{align*}
\hat{B}=B+\frac{1}{\sigma_0}\sqrt{\frac{2\epsilon_p}{\alpha}}
\end{align*}
be an expanded radius to account for error in our estimate of $\hat\theta_p$, we use the estimator
\begin{align*}
& \hat{\theta}_g=\operatorname*{\arg\min}_{\theta\in B_2(\hat\theta_p,\hat{B})}\hat{L}(\theta;Z_g) \quad
\text{where} \quad B_2(\hat\theta_p,\hat{B})=\{\theta\in\Theta\mid\|\theta-\hat\theta_p\|_F\le\hat{B}\}.
\end{align*}
Note that we have assume $\hat{B}$ is known; in practice, this constraint can be included as an additive regularization term. Intuitively, this formulation mirrors transfer learning algorithms based on fine-tuning---i.e., initializing the parameters to the proxy data $\theta_p$ and then taking a small number of steps of stochastic gradient descent (SGD) on the gold data $\theta_g$. In particular, SGD can be interpreted as $L_2$ regularization on the parameters~\citep{ali2020implicit}, so fine-tuning $L_2$-regularizes $\hat{\theta}_g$ towards $\theta_p$.
\begin{theorem}
\label{thm:transfer}
Under Assumptions~\ref{assmp:bounded}, \ref{assmp:convexity} \& \ref{assump:mineigval}, for any $q(x)$ with support on $B_2(0,x_{\text{max}})$,
\begin{align*}
\mathbb{P}_{p(Z)}\left[L_q(\hat{\theta}_g)\le\frac{2K^2\epsilon_g}{\alpha}\right]\ge1-\delta,
\end{align*}
where
\begin{align*}
\epsilon_g &= {\color{red}\hat{B}}\cdot\sqrt{\frac{18{\color{red}K^2}({\color{red}K^2\hat{B}^2}+\xi_{\text{max}}^2)(d^2 \max\left\{1, \log\left(1 + \frac{8\phi_{\text{max}}K{\color{red}n_g}}{\ell_{\text{max}}^2}\right)\right\} + \log\frac{{\color{red}4}}{\delta})}{{\color{red}n_g}}}.
\end{align*}
\end{theorem}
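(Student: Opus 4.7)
The plan is (i) to show that a suitably rotated version of $\theta_g^*$ is feasible for the gold-stage optimization with high probability, and (ii) to rerun the three-lemma pipeline from Section~\ref{sec:function} on the gold data, restricted to the constraint ball $B_2(\hat\theta_p, \hat{B})$, which shrinks the effective range of the model class from $\ell_{\text{max}}$ to $K\hat{B}$ and therefore delivers the tightened concentration constant $\epsilon_g$.

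For step (i), I would apply Theorem~\ref{thm:identification} to the proxy stage with sample size $n_p$ and confidence $1 - \delta/2$; this already gives $\|g(\hat\theta_p) - g(\theta_p^*)\|_F \le \sqrt{2\epsilon_p/\alpha}$ (the inline display preceding the theorem records exactly this). Next I would invoke a standard matrix-factorization perturbation result: whenever $\sigma_{\min}(\theta_p^*) \ge \sigma_0$ and $\|\theta\theta^\top - \theta_p^*\theta_p^{*\top}\|_F \le \eta$, there exists an orthogonal $Q$ with $\|\theta - \theta_p^* Q\|_F \le \eta/\sigma_0$. Because $f_\theta = f_{\theta Q}$ for any orthogonal $Q$, I can absorb $Q$ into the definitions of both $\theta_p^*$ and $\theta_g^*$ simultaneously without changing any functional values, and $\|\theta_g^* - \theta_p^*\|_F \le B$ is preserved since the Frobenius norm is invariant under orthogonal right-multiplication. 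The triangle inequality then gives
\[
\|\theta_g^* - \hat\theta_p\|_F \le \|\theta_g^* - \theta_p^*\|_F + \|\theta_p^* - \hat\theta_p\|_F \le B + \tfrac{1}{\sigma_0}\sqrt{2\epsilon_p/\alpha} = \hat{B},
\]
placing $\theta_g^*$ in the feasible set on this high-probability event.

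For step (ii), I would mirror the proof of Theorem~\ref{thm:identification}. Strong convexity of $\tilde{L}_q$ in $\phi$ (Lemma~\ref{lem:loss_diffconv}) is unchanged and yields $L_q(\hat\theta_g) - L_q(\theta_g^*) \ge \alpha\|g(\hat\theta_g) - g(\theta_g^*)\|_F^2$. The substantive modification is to the uniform concentration of Lemma~\ref{lem:lossbound}: every feasible $\theta$ is within distance $2\hat{B}$ of $\theta_g^*$ in $\theta$-space, so the Lipschitz bound from Lemma~\ref{lem:lipschitz} gives $|f_\theta(x) - f_{\theta_g^*}(x)| \le K\hat{B}$, and rerunning the covering-based concentration argument with this tightened per-sample range (while retaining the $O(d^2)$-dimensional $\phi$-net, which contributes the unchanged log factor) delivers $\epsilon_g$ with probability at least $1 - \delta/2$ on the gold data. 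Combining strong convexity with concentration exactly as in Theorem~\ref{thm:identification} produces $\|g(\hat\theta_g) - g(\theta_g^*)\|_F \le \sqrt{2\epsilon_g/\alpha}$, and a final use of Lemma~\ref{lem:lipschitz} yields the pointwise conclusion $(f_{\hat\theta_g}(x) - f_{\theta_g^*}(x))^2 \le 2K^2\epsilon_g/\alpha$ for every $x$, from which the $L_q$ bound is immediate; a union bound over the proxy and gold stages accounts for the total failure probability $\delta$.

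The main obstacle is the matrix-factorization perturbation step and the rotation bookkeeping it entails: one must exhibit a single orthogonal $Q$ under which $\hat\theta_p$, $\theta_p^*$, and $\theta_g^*$ can be expressed in a consistent frame, so that the triangle inequality can produce feasibility of $\theta_g^*$. The remainder of the argument is essentially a re-run of Lemma~\ref{lem:lossbound} with the contracted residual bound $K\hat{B}$ replacing $\ell_{\text{max}}$, paired with the already-established strong convexity and Lipschitz lemmas.
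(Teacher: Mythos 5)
Your proposal matches the paper's proof essentially step for step: the proxy-stage application of Theorem~\ref{thm:identification}, the matrix-factorization perturbation bound (the paper's Lemma~\ref{lem:rotationbound}, which yields the single aligning rotation $R_p$ that you call $Q$ and that the paper absorbs by defining $\tilde\theta_g=\theta_g^*R_p$), the triangle inequality showing a global minimizer lies in $B_2(\hat\theta_p,\hat B)$, and the rerun of the concentration argument with the residual bound $\ell_{\text{max}}$ replaced by $K\hat B$ while the covering-number log factor is unchanged. No gaps; this is the paper's argument.
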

Thus, if $B$ is small and $n_p$ is large, $f_{\hat\theta_g}$ is accurate even if $n_g$ is small; we give a proof in Appendix~\ref{sec:thm:transfer:proof}.

\section{Function Identification of ReLU Networks}
\label{sec:relu}

Next, we consider the identifiability of ReLU networks.

\subsection{Main Result}

We consider a ReLU network $f_{\theta}:\mathcal{X}\to\mathcal{Y}$, where $\mathcal{X}\subseteq\mathbb{R}^d$ and $\mathcal{Y}\subseteq\mathbb{R}$, given by 
\begin{align*}
f_{\theta}(x)=\sum_{j=1}^k\sigma(\theta_j^\top x),
\end{align*}
where $\theta\in\mathbb{R}^{d\times k}$, and the ReLU $\sigma(z)=\mathbbm{1}(z\ge0)\cdot z$ is applied componentwise. Furthermore, we consider the case $\mathcal{X}=S^{d-1}$, where $S^{d-1}\subseteq\mathbb{R}^d$ is the unit sphere in $d$ dimensions, and consider the input distribution $p=\text{Uniform}(S^{d-1})$. 
We make two assumptions about the true parameters $\theta^*$.
\begin{assumption}
\label{assump:relubound}
\rm
We have $\|\theta_i^*\|_2=1$ for all $i\in[k]$.
\end{assumption}
\begin{assumption}
\label{assump:reluskip}
\rm
There exists $\alpha_0\in\mathbb{R}_{>0}$ such that
\begin{align*}
\|\theta^*_i\pm\theta^*_{i'}\|_2\ge\alpha_0
\qquad(\forall i,i'\in[k]).
\end{align*}
\end{assumption}
That is, we assume that the components $\theta_i^*$ are unit vectors that are separated from one another (as well as from the negatives of the other vectors). This assumption is necessary since if two components are close together, they can approximately be combined into a single one, so closer components are harder to identify.

As before, the estimated parameters $\hat\theta(Z)$ minimize the empirical loss $\hat{L}(\theta;Z)=n^{-1}\sum_{i=1}^n(f_\theta(x_i)-y_i)^2$, where $y_i=f_{\theta^*}(x_i)+\xi_i$. Since the ground truth parameters are unit vectors, we make the same assumption about the estimated parameters $\theta$---i.e., $\|\theta_i\|_2=1$ for all $i\in[k]$. 

Now, our main result says that ReLU networks satisfying our assumptions satisfy function identification.
\begin{lemma}
\label{lem:relumain}
Under Assumptions~\ref{assump:relubound} \&~\ref{assump:reluskip},
for any $\eta\in\mathbb{R}_{>0}$ satisfying
$\eta\le(6126d^2k^2)^{-1}$, if $\mathbb{E}_{p(x)}\left[|f_\theta(x)-f_{\theta^*}(x)|\right] \le \eta$, then
\begin{align}
\label{eqn:relufi}
|f_\theta(x)-f_{\theta^*}(x)|\le20k^2\sqrt{d^3\eta}
\qquad(\forall x\in\mathcal{X}).
\end{align}
\end{lemma}
Intuitively, this key lemma says that parameters with small errors on distribution $p$ have small errors everywhere; we provide a proof in Section~\ref{sec:lem:relumain:proof}. With this result, a similar argument as the proof of Theorem~\ref{thm:identification} shows the following.
\begin{theorem}
\label{thm:reluerror}
Under Assumptions~\ref{assump:relubound} \&~\ref{assump:reluskip}, for sufficiently large $n$, we have
\begin{align*}
\mathbb{P}_{p(Z)}\left[L_q(\hat\theta(Z))\le400k^4d^3(2\epsilon)^{1/2}\right]\ge1-\delta,
\end{align*}
where
\begin{align*}
\epsilon &= \sqrt{\frac{18\ell_{\text{max}}^2(\ell_{\text{max}}^2+\xi_{\text{max}}^2)(dk \max\left\{1, \log\left(1 + \frac{4kKn}{\ell_{\text{max}}^2}\right)\right\} + \log\frac{2}{\delta})}{n}},
\end{align*}
$\ell_{\text{max}}=2k$ and $K=4k$.
\end{theorem}
In other words, ReLU networks satisfy function identification. We give a proof in Appendix~\ref{sec:thm:reluerror:proof}.

\subsection{Proof of Lemma~\ref{lem:relumain}}
\label{sec:lem:relumain:proof}

A key challenge in proving function identification is that $f_{\theta}$ is invariant to permutations of the components of $\theta_j$. Thus, we need to establish a mapping from $\theta_j$ to $\theta_i^*$. This mapping is determined by the subset of indices
\begin{align*}
J^\alpha_i=\{j\in[k]\mid\|\theta_j-\theta_i^*\|_2\le\alpha\},
\end{align*}
where $\alpha\in\mathbb{R}_{>0}$ is a hyperparameter to be chosen later. First, we note that as long as $\alpha\le\alpha_0/2$, then the $J^\alpha_i$ are disjoint---i.e., $J^\alpha_i\cap J^\alpha_{i'}=\varnothing$; in particular, if $j\in J^\alpha_i$, then for any $i'\in[k]$, we have
\begin{align*}
\|\theta_j-\theta_{i'}^*\|_2
\ge\|\theta_i^*-\theta_{i'}^*\|_2-\|\theta_j-\theta_i^*\|_2\ge\alpha_0-\alpha
\ge\alpha,
\end{align*}
so $j\not\in J^\alpha_{i'}$. As a consequence, if each $J^\alpha_i$ is a singleton set $J^\alpha_i=\{j\}$, then the mapping $i\mapsto j$ is a bijection. Now, we prove our key lemma, which says that as long as the loss is sufficiently small, then this condition holds.
\begin{lemma}
\label{lem:relunothing}
For any $\epsilon\in\mathbb{R}_{>0}$, if
\begin{align*}
\mathbb{E}_{p(x)}\left[|f_\theta(x)-f_{\theta^*}(x)|\right] \le\frac{\epsilon^2\sqrt{\pi}}{2d}-\frac{\pi k\epsilon^3}{\alpha}-4k\epsilon^3\sqrt{\pi d},
\end{align*}
then $J^\alpha_i=\{j\}$ is a singleton set for each $i\in[k]$.
\end{lemma}
We give a proof in Section~\ref{sec:lem:relunothing:proof}. In particular, taking
\begin{align*}
\alpha=\frac{\pi k\epsilon^3}{\sqrt{\pi}\epsilon^2/(2d)-\eta-4k\epsilon^3/\sqrt{\pi d}}
\end{align*}
then $\mathbb{E}_{p(x)}\left[|f_\theta(x)-f_{\theta^*}(x)|\right]=\eta$; taking
$\epsilon=\sqrt{6d\eta/\pi^{1/2}}$,
we have
\begin{align*}
\alpha
=\frac{\pi k(6d\eta/\pi^{1/2})^{3/2}}{3\eta-\eta-4k\epsilon^3/\sqrt{\pi d}}
\le20k\sqrt{d^3\eta},
\end{align*}
where we have used the fact that $4k\epsilon^3/\sqrt{\pi d}\le\eta$ by our assumption on $\eta$. Finally, by Lemma~\ref{lem:relunothing}, each $J_i^\alpha=\{j\}$ is a singleton set, which implies there exists a bijective map $i\mapsto j$ such that $\|\theta_j-\theta_i^*\|_2\le\alpha$; without loss of generality, assume this map is the identity. Then, we have
\begin{align*}
|f_\theta(x)-f_{\theta^*}(x)|
&\le\sum_{i=1}^k|\sigma(\theta_i^\top x)-\sigma(\theta_i^{*\top}x)| \le k\alpha\le20k^2\sqrt{d^3\eta}.
\end{align*}
as claimed. $\qed$

\subsection{Proof of Lemma~\ref{lem:relunothing}}
\label{sec:lem:relunothing:proof}

It suffices to prove the contrapositive---i.e., $J^\alpha_i$ is not a singleton set for some $i\in[k]$, then
\begin{align*}
\mathbb{E}_{p(x)}\left[|f_\theta(x)-f_{\theta^*}(x)|\right]\ge\frac{\sqrt{\pi}\epsilon^2}{2d}-\frac{\pi k\epsilon^3}{\alpha}-4k\epsilon^3\sqrt{\pi d}.
\end{align*}
In this case, by the pigeonhole principle, there exists $i\in[k]$ such that $J^\alpha_i=\varnothing$. For this $i$, define the region
\begin{align*}
X_i=\{x\in\mathcal{X}\mid|\theta_i^{*\top}x|\le\epsilon\},
\end{align*}
where $\epsilon$ is the given hyperparameter. Roughly speaking, our strategy is to show that all the components $g(\theta_{i'}^{*\top}x)$ (for all $i'\in[k]$ such that $i'\neq i$) and $g(\theta_j^\top x)$ (for all $j\in[k]$) are linear in a large fraction of $X_i$. Then, since the component $g(\theta_i^{*\top}x)$ is nonlinear, the gap between it and the remaining components must be large. To be precise, note that
\begin{align}
\label{eqn:reludecomp}
&f_\theta(x)-f_{\theta^*}(x) =\underbrace{\left(\sum_{j=1}^kg(\theta_j^\top x)-\sum_{i\in[k]\setminus\{i'\}}g(\theta_{i'}^{*\top}x)\right)}_{\coloneqq h_i(x)}-g(\theta_i^{*\top}x),
\end{align}
where the first term $h_i(x)$ is linear on a region $\tilde{X}_i\subseteq X_i$ that is a large fraction of $X_i$ (in terms of $p(x)$), but the second term $g(\theta_i^{*\top}x)$ is nonlinear on $\tilde{X}_i$. Thus, we can establish a lower bound on the loss $\mathbb{E}_{p(x)}\left[|f_\theta(x)-f_{\theta^*}(x)|\right]$ on $\tilde{X}_i$.

One subtlety is that to establish this lower bound, $\tilde{X}_i$ must be symmetric around the hyperplane $\theta_i^{*\top}x=0$; thus, our proof approximates the sphere slice $X_i$ as a cylinder $X_i'$, and then cuts out the portions of $X_i'$ where $h_i(x)$ may be nonlinear, to obtain $\tilde{X}_i'$. Then, we use the above argument with $\tilde{X}_i'$ instead of $X_i'$.

Now, we formalize this argument. First, without loss of generality, we can consider a coordinate system where
\begin{align*}
\theta_i^*&=\begin{bmatrix}1&0&...&0\end{bmatrix}^\top. \end{align*}
In this coordinate system, we have
\begin{align*}
X_i=\{x\in\mathcal{X}\mid|x_1|\le\epsilon\}.
\end{align*}
Before we continue, we first approximate the set $X_i$; this step is necessary to establish the symmetry property mentioned above. In particular, let
\begin{align*}
X_i'&=\{\phi(x)\mid x\in X_i\} \\
\phi(x)&=\begin{bmatrix}x_1&\frac{x_2}{\sqrt{1-x_1^2}}&...&\frac{x_d}{\sqrt{1-x_1^2}}\end{bmatrix}^\top.
\end{align*}
Intuitively, $X_i'$ is a cylinder approximating the slice $X_i$ of the sphere $S^{d-1}$ (so its axis is $\theta_i^*$)---i.e., it keeps the first component $x_1$ but projects the remaining ones to form a $d-2$ sphere. In particular, we can express $X_i'$ as
\begin{align*}
X_i'=[-\epsilon,\epsilon]\times Z
\qquad\text{where}\qquad Z=S^{d-2},
\end{align*}
i.e., the product of the interval $x_1\in[-\epsilon,\epsilon]$ with the $d-2$ sphere $Z=\{z\in\mathbb{R}^{d-1}\mid\|z\|_2=1\}$. This decomposition of $X_i'$ is important since we will lower bound the loss on each interval $[-\epsilon,\epsilon]\times\{z\}$ for $z\in Z$ independently; each of these intervals satisfies the symmetry property described above. Before describing this step, we first show that the loss on $X_i'$ is a good approximation of the loss on $X_i$.
\begin{lemma}
\label{lem:reluapprox}
We have
\begin{align*}
&\left|\int_{X_i}|f_\theta(x)-f_{\theta^*}(x)|dx-\int_{X_i'}|f_\theta(x)-f_{\theta^*}(x)|dx\right| \le 2k\epsilon^3\sqrt{d}\cdot|S^{d-2}|.
\end{align*}
\end{lemma}
We give a proof in Appendix~\ref{sec:lem:reluapprox:proof}. By this lemma, it suffices to lower bound the loss on $X_i'$ instead of $X_i$. Now, for each $z\in Z$, define the interval
\begin{align*}
X_i^z=[-\epsilon,\epsilon]\times\{z\}.
\end{align*}
Note that these intervals partition $X_i$; thus, we can lower bound the loss independently on each interval, and then integrate the bound over all intervals $z\in Z$. Intuitively, we can lower bound the loss on a given $X_i^z$ as long as the first term in $f_\theta(x)-f_{\theta^*}(x)$ shown in (\ref{eqn:reludecomp}) is linear on all of $X_i^z$.

Thus, the remainder of our proof is divided into two steps: (i) lower bound the loss if this term is linear on $X_i^z$, and (ii) upper bound the fraction of $z\in Z$ for which this term is nonlinear. For the first step, we have the following result.
\begin{lemma}
\label{lem:relubound}
For any $z\in Z$ and $\beta_0,\beta_1\in\mathbb{R}$, we have
\begin{align*}
\int_{-\epsilon}^\epsilon|(\beta_0+\beta_1w)-g(w)|dw\ge\frac{\epsilon^2}{4}.
\end{align*}
\end{lemma}
We give a proof in Appendix~\ref{sec:lem:relubound:proof}. For the second step, the condition that the first term of (\ref{eqn:reludecomp}) is linear on $X_i^z$ holds if all the terms $g(\theta_j^\top x)$ (for $j\in[k]$) and $g(\theta_{i'}^{*\top}x)$ (for $i'\in[k]\setminus\{i\}$) are linear on $X_i^z$.

Thus, it suffices to bound fraction of $z$ such that each of these terms is nonlinear separately. To this end, note that the ReLU function $g(w)$ is linear on a region as long as $w\neq0$ on that region. Thus, it suffices to \emph{omit} the regions
\begin{align*}
Z_i^\beta&=\left\{z\in Z\mid\exists x_1\in[-\epsilon,\epsilon]\;.\;\beta^\top([x_1]\circ z)=0\right\} \\
[x_1]\circ z&=\begin{bmatrix}x_1&z_1&...&z_{d-1}\end{bmatrix}^\top,
\end{align*}
for all $\beta=\theta_j$ (for $j\in[k]$) and $\beta=\theta_{i'}^*$ (for $i'\in[k]\setminus\{i\}$). Then, defining
\begin{align*}
\tilde{Z}_i=Z\setminus\left(\bigcup_{j=1}^kZ_i^{\theta_j}\cup\bigcup_{i'\in[k]\setminus\{i\}}Z_i^{\theta_{i'}^*}\right),
\end{align*}
we know that the first term $h_i(x)$ in (\ref{eqn:reludecomp}) is linear on $X_i^z$ for all $z\in\tilde{Z}_i$. Our next result bounds the size of $Z_i^\beta$.
\begin{lemma}
\label{lem:relulinear}
For any $\beta\in\mathbb{R}^d$ such that $\|\beta\|_2=1$ and $\|\beta\pm\theta_i^*\|_2\ge\alpha$, we have
\begin{align*}
|Z_i^\beta|\le\frac{2\epsilon\cdot|S^{d-3}|}{\alpha}.
\end{align*}
\end{lemma}
We give a proof in Appendix~\ref{sec:lem:relulinear:proof}. Furthermore, by Lemma~\ref{lem:relulinear} and the fact that $Z=S^{d-2}$, we have
\begin{align*}
|\tilde{Z}_i|\ge|S^{d-2}|-\frac{2k\epsilon\cdot|S^{d-3}|}{\alpha}.
\end{align*}
Now, we can put these results together to prove Lemma~\ref{lem:relunothing}. First, letting $\tilde{X}_i'=[-\epsilon,\epsilon]\times\tilde{Z}_i$, we have
\begin{align*}
&\mathbb{E}_{p(x)}[|f_\theta(x)-f_{\theta^*}(x)|] \\
&\ge\frac{1}{|S^{d-1}|}\int_{X_i}|f_\theta(x)-f_{\theta^*}(x)|dx \\
&\ge\frac{1}{|S^{d-1}|}\int_{X_i'}|f_\theta(x)-f_{\theta^*}(x)|dx-\frac{2k\epsilon^3\sqrt{d}\cdot|S^{d-2}|}{|S^{d-1}|} \\
&\ge\frac{1}{|S^{d-1}|}\int_{\tilde{X}_i'}|f_\theta(x)-f_{\theta^*}(x)|dx-\frac{2k\epsilon^3\sqrt{d}\cdot|S^{d-2}|}{|S^{d-1}|},
\end{align*}
where the first step follows since $X_i\subseteq\mathcal{X}$, the second follows by Lemma~\ref{lem:reluapprox}, and the third follows since $\tilde{X}_i'\subseteq X_i'$. Next, letting $x=[x_1]\circ z$, we have
\begin{align*}
&f_\theta([x_1]\circ z)-f_{\theta^*}([x_1]\circ z) \\
&=h_i([x_1]\circ z)-g(\theta_i^{*\top}([x_1]\circ z)) \\
&=\beta(z)^\top([x_1]\circ z)-g(\theta_i^{*\top}([x_1]\circ z)),
\end{align*}
where the second equality follows for some $\beta(z)\in\mathbb{R}^d$ since $h_i(x)$ is linear on $\tilde{X}_i^z$ for each $z\in\tilde{Z}_i$. Letting $\beta(z)=[\beta_1(z)]\circ\beta'(z)$, and letting $\beta_0(z)=\beta'(z)^\top z$, we have
\begin{align*}
f_\theta([x_1]\circ z)-f_{\theta^*}([x_1]\circ z)
=\beta_0(z)+\beta_1(z)x_1-g(x_1).
\end{align*}
Thus, by Lemma~\ref{lem:relubound}, we have
\begin{align*}
&\int_{\tilde{X}_i'}|f_\theta(x)-f_{\theta^*}(x)|dx \\
&=\int_{\tilde{Z}_i}\int_{-\epsilon}^\epsilon|f_\theta([x_1]\circ z)-f_{\theta^*}([x_1]\circ z)|dx_1dz \\
&\ge\int_{\tilde{Z}_i}\frac{\epsilon^2}{4}dz \\
&\ge\left(|S^{d-2}|-\frac{2k\epsilon\cdot|S^{d-3}|}{\alpha}\right)\frac{\epsilon^2}{4}.
\end{align*}
As a consequence, we have
\begin{align*}
&\mathbb{E}_{p(x)}[|f_\theta(x)-f_{\theta^*}(x)|] \\
&\ge\left(\frac{|S^{d-2}|}{|S^{d-1}|}-\frac{2k\epsilon}{\alpha}\cdot\frac{|S^{d-3}|}{|S^{d-1}|}\right)\frac{\epsilon^2}{4}-2k\epsilon^3\sqrt{d}\cdot\frac{|S^{d-2}|}{|S^{d-1}|}.
\end{align*}
Finally, for any $d'\le d$, note that
\begin{align*}
\frac{2\sqrt{\pi}}{d}\le\frac{|S^{d'-1}|}{|S^{d'-2}|}\le2\sqrt{\pi},
\end{align*}
which follows from the formula of volume of the $n$-sphere $|S^n|=\pi^{n/2}/\Gamma((n/2)+1)$ and the Gamma function identity $\Gamma(w+1)=w\Gamma(w)$. $\qed$

\section{Conclusion}

We have presented results demonstrating that over-parameterization does not fundamentally harm learning models that are robust to arbitrary distribution shifts. In particular, even though we can no longer identify the true parameters for QNNs, we show that we can identify the true function, thereby enabling us to prove new results in bandits and transfer learning. Finally, we also prove function identification for a subclass of ReLU networks. 

\bibliography{ref}
\bibliographystyle{abbrvnat}

\newpage
\appendix
\section{Additional Related Work}
\label{sec:appendixrel}

\textbf{Low-rank matrix factorization.}
Our notion of functional identification for quadratic neural network is related to the low-rank matrix factorization literature. However, they impose a low-rank structure on the matrix to recover, and hence typically require extra conditions to identify the matrix---e.g., the restricted isometry property (RIP) \citep{candes2011tight,ge2017no}, or bounded $\ell_2$ norm of noise vector \citep{kabanava2016stable}. In contrast, we consider a more general case and do not assume any underlying structure of the matrix; in particular, since our goal is to capture over-parameterization of neural networks, our matrix is usually decomposed as $\phi=\theta \theta^\top$, where $\theta\in\mathbb{R}^{d\times k}$ and $k \ge d$ (and $\phi$ is not necessarily low-rank).
Also, we study the prediction error of neural networks in the presence of distribution shifts, whereas the goal of the low-rank literature is to recover the true matrix.

\textbf{Multi-armed bandits.}
Prior literature on parameterized bandits has considered a number of functional forms, ranging from linear \citep{abbasi2011improved, rusmevichientong2010linearly} to neural tangent kernels \citep{zhou2020neural}. Most of this work makes a \emph{realizability} assumption that the model family contains the true model;\footnote{Slightly different from realizability in PAC learning~\citep{kearns1994introduction}, which says there is a model with zero true loss.} implicitly, they consider model families where there is a \textit{unique}, identifiable true parameter. These assumptions are necessary precisely due to the fact that the test and training distributions are different; thus, much of the bandit literature has focused on proving parameter identification results to enable learning. In contrast, the identifiability assumption does not hold for quadratic neural networks because they are invariant to parameter transformations. To the best of our knowledge, we consider the first over-parameterized bandit problem that considers a model that is not parameter-identifiable; we find that similar regret results hold as long as the function represented by the model can be identified. Separately, \cite{foster2020beyond} makes a general connection between online regression oracles and the regret of a bandit algorithm; however, their approach only provides good guarantees when the regression oracle returns a model that generalizes off-distribution. Finally, recent work on UCB with neural tangent kernels~\citep{zhou2020neural} provides general regret bounds, but their bound is only sublinear under conditions such as the true reward function having small RKHS norm (see Remark 4.8 in their paper), which amounts to assuming they can recover the true parameters.

\textbf{Novelty}.
We briefly discuss the novelty of our results compared to existing work. First, to the best of our knowledge, all our results for ReLU networks in Section~\ref{sec:relu} are novel. For QNNs, the results in Section~\ref{sec:function} are novel. To the best of our knowledge, the proof strategy in our main result, Theorem~\ref{thm:identification}, is novel, though we note that the preceding lemmas are based on standard arguments---e.g., bounding the  convexity of $\tilde{L}_p(\phi)$ (Lemma~\ref{lem:loss_diffconv}) and the Lipschitz constant (Lemma~\ref{lem:lipschitz}) of $\tilde{f}_{\phi}$; also, Lemma~\ref{lem:lossbound} relies on a standard covering number argument. For our applications to bandits and transfer learning, our key novel results are Lemma~\ref{lem:sbarm} for bandits, which proves smoothed bounded response for quadratic neural networks, and Lemma~\ref{lem:rotationbound} for transfer learning. Finally, to the best of our knowledge, our arguments in Section~\ref{sec:neuralmodules} are novel.

\section{Generalization Bounds for Neural Module Networks}
\label{sec:neuralmodules}

While function identification enables robust generalization, many data generating processes are too complex to be identifiable. Neural module networks are designed to break complex prediction problems into smaller tasks that are individually easier to solve. These models take two kinds of input: (i) a sequence of tokens $w$ (e.g., word embeddings) indicating the correct composition of modules, and (ii) the input $x$ to the modules. Then, the model predicts the sequence of modules $j_1...j_T$ based on $w$, and runs the modules in sequence to obtain output $x'=f_{j_T}(...(f_{j_1}(x))...)$. 

We study conditions under which neural module networks can robustly generalize. Rather than study arbitrary distribution shifts, we consider two separate shifts:
\begin{itemize}[topsep=0pt,itemsep=0ex,partopsep=1ex,parsep=1ex,leftmargin=*]
\item {\bf Module inputs:} We assume that the individual modules are identifiable; as a consequence, we assume the shift to the module input $x$ can be arbitrary.
\item {\bf Module composition:} We consider shifts to the token sequence $w$. If the model mapping $w$ to $j_1...j_T$ is identifiable, then the entire model is identifiable. Instead, we show that when this model is not identifiable, compositional structure can still aid generalization. Intuitively, we show that while small shifts in the compositional structure can cause large shifts in the distribution $p(w)$, models that leverage the structure of $p(w)$ can still generalize well.
\end{itemize}
In more detail, consider a simplified neural module network $f$, which includes (i) a set of neural modules $\{f_j:\mathcal{X}\to\mathcal{X}\}_{j=1}^k$, and (ii) a parser $g:\mathcal{Z}^T\to[k]^T$, where $\mathcal{Z}\subseteq\mathbb{R}^r$, with model class $g\in\mathcal{G}$. We assume each component of $f_j(x)$ is computed by a separate quadratic neural network; we discuss the architecture of $g$ below. Then, given an input $x\in\mathcal{X}\subseteq\mathbb{R}^d$ and $w\in\mathcal{W}=\mathcal{Z}^T$, the corresponding neural module network $f:\mathcal{X}\times\mathcal{W}\to\mathcal{X}$ is defined by
\begin{align*}
f(x,w)= & (f_{j_T}\circ...\circ f_{j_1})(x)=f_{j_T}(...(f_{j_1}(x))...) \qquad\text{where}\qquad j_1...j_T=g(w).
\end{align*}
We assume that $g$ has compositional structure---i.e., for some $\tilde{g}:[k]\times\mathcal{Z}\to[k]$, we have
\begin{align*}
g(w)=j_1...j_T
\qquad\text{where}\qquad
j_t=\begin{cases}
0&\text{if}~t=0 \\
\tilde{g}(z_t,j_{t-1})&\text{otherwise},
\end{cases}
\end{align*}
where $w=z_1...z_T$. Intuitively, $w$ is a sequence of word vectors; then, the current neural module $j_t=\tilde{g}(z_t,j_{t-1})$ depends both on the current word vector $z_t$ and the previous neural module $j_{t-1}$. First, we assume that the individual modules have been functionally identified.

We assume we have fully labeled data we can use to train the neural modules---i.e., for each input $x$ and sequence $w$, we have both the desired sequence $j_1...j_T$ of neural modules, as well as the entire execution $x_0,x_1,...,x_T$, where $x_0=x$ and $x_{t+1}=f_{j_t}(x_t)$ otherwise. Thus, we can use supervised learning to train the neural modules;\footnote{Neural modules are often trained with only partial supervision~\citep{andreas2016neural}; we leave an analysis of this strategy to future work since our focus is on understanding generalization rather than learning dynamics.}
in particular, we can construct labeled examples $(j_{t-1},z_t,j_t)$ used to train the parser $\tilde{g}$, and labeled examples $(x_t,x_{t+1})$ to train the modules $f_{j_t}$. 
For simplicity, we assume we have a uniform lower bound $n$ on the number of training examples for the parser and for each module. Then, we have the following straightforward result:
\begin{lemma}
\label{lem:modulebound}
Under Assumptions~\ref{assmp:bounded} \& \ref{assmp:convexity}, with probability at least $1-dk\delta$, for each $j\in[k]$,
\begin{align*}
\|\hat{f}_j(x)-f_j^*(x)\|_2\le\sqrt{\frac{2dK^2\epsilon}{\alpha}}\eqqcolon\epsilon_f\qquad(\forall x\in\mathcal{X}),
\end{align*}
where $\hat{f}_j$ is the estimated module and $f_j^*$ is the ground truth module.
\end{lemma}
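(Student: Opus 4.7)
The plan is to view each module $f_j$ as a collection of $d$ independent quadratic neural network scalar regressors (one per output coordinate), apply Theorem~\ref{thm:identification} to each one, and then combine the bounds via a union bound and a Pythagorean sum.

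First I would fix notation: write $f_j^*(x) = (f_{j,1}^*(x), \ldots, f_{j,d}^*(x))^\top$ and similarly $\hat{f}_j(x) = (\hat{f}_{j,1}(x), \ldots, \hat{f}_{j,d}(x))^\top$, where by hypothesis each coordinate $f_{j,i}$ is computed by a scalar quadratic neural network of the form studied in Section~\ref{sec:function}. Since we are given at least $n$ labeled supervised training examples of the form $(x_t, x_{t+1})$ for each module, each scalar coordinate network can be trained by ordinary least squares on the corresponding coordinate label, exactly as in the setup preceding Theorem~\ref{thm:identification}, so the hypotheses of that theorem are satisfied (the input bound and Assumption~\ref{assmp:convexity} transfer, and the noise component is bounded since $\xi$ is bounded).

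Next I would invoke Theorem~\ref{thm:identification} for each of the $dk$ scalar networks indexed by $(j, i) \in [k] \times [d]$: with probability at least $1 - \delta$,
\begin{align*}
(\hat{f}_{j,i}(x) - f_{j,i}^*(x))^2 \le \frac{2K^2\epsilon}{\alpha} \qquad (\forall x \in \mathcal{X}).
\end{align*}
Taking a union bound over all $dk$ coordinate networks, the above holds simultaneously for all $(j, i)$ with probability at least $1 - dk\delta$.

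Finally, on this good event, fix any $j \in [k]$ and any $x \in \mathcal{X}$ and expand the Euclidean norm coordinate-wise:
\begin{align*}
\|\hat{f}_j(x) - f_j^*(x)\|_2^2 = \sum_{i=1}^d (\hat{f}_{j,i}(x) - f_{j,i}^*(x))^2 \le d \cdot \frac{2K^2\epsilon}{\alpha},
\end{align*}
and take square roots to obtain the claimed bound $\epsilon_f = \sqrt{2dK^2\epsilon/\alpha}$. There is no real obstacle here; the only thing to be careful about is that Theorem~\ref{thm:identification} gives a uniform-in-$x$ guarantee (so no additional effort is needed to pass from an in-expectation bound to a pointwise bound), and that $\delta$ in the lemma statement corresponds to the per-coordinate failure probability rather than the aggregate.
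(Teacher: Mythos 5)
Your proof is correct and is exactly the argument the paper intends: it only remarks that the lemma ``follows straightforwardly from Theorem~\ref{thm:identification} along with a union bound,'' and your decomposition into $dk$ scalar quadratic networks, per-coordinate application of Theorem~\ref{thm:identification}, union bound, and coordinate-wise Pythagorean sum fills in precisely those steps. The constants $\sqrt{2dK^2\epsilon/\alpha}$ and the failure probability $dk\delta$ both match.
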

This result follows straightforwardly from Theorem~\ref{thm:identification} along with a union bound. In contrast, we do not assume the parsing model robustly generalizes, but only on distribution. For the subsequent analysis, we can use any neural network models that satisfy the statement of Lemma~\ref{lem:modulebound}.
\begin{lemma}
\label{lem:parserbound}
Under Assumptions~\ref{assmp:bounded} \& \ref{assmp:convexity}, with probability at least $1-\delta$, we have
\begin{align*}
\mathbb{P}_{\tilde{p}(z,j)}\left[\hat{\tilde{g}}(z,j)\neq\tilde{g}^*(z,j)\right]\le4R_n(\mathcal{G})+\sqrt{\frac{2\log(2/\delta)}{n}}\eqqcolon\epsilon_g,
\end{align*}
where $R_n(\mathcal{G})$ is the Rademacher complexity of $\mathcal{G}$ (including its loss function), where $p(z,j)=T^{-1}\sum_{t=1}^T p_t(z,j)$, and where
\begin{align*}
\tilde{p}_t(z,j)&=
\begin{cases}
\mathbbm{1}(j=0)\cdot\tilde{p}(z)&\text{if}~t=1 \\
\sum_{j'=1}^k\int\mathbbm{1}(j=\tilde{g}^*(z',j'))\cdot\tilde{p}(z\mid z')\cdot \tilde{p}_{t-1}(z',j')dz'&\text{otherwise}.
\end{cases}
\end{align*}
\end{lemma}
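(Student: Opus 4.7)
The plan is to apply a standard Rademacher-complexity-based uniform convergence bound for multi-class classification to the empirical risk minimizer $\hat{\tilde g}$. Since the parser is trained with fully supervised labels $j_t = \tilde g^*(z_t, j_{t-1})$ extracted from the labeled execution traces, the realizability hypothesis $\tilde g^* \in \mathcal{G}$ makes the empirical $0$-$1$ loss vanish at the truth, so the ERM satisfies $\hat L(\hat{\tilde g}) \le \hat L(\tilde g^*) = 0$. Combined with uniform convergence, this will give the stated high-probability bound on the true $0$-$1$ error under $\tilde p$.

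The first substantive step is to verify that $\tilde p$ as defined in the statement is precisely the marginal distribution of parser inputs encountered across an execution. I would prove this by induction on $t$: at $t=1$ we set $j_0 = 0$ deterministically and $z_1 \sim \tilde p(z)$, matching $\tilde p_1(z,j) = \mathbbm{1}(j=0)\cdot \tilde p(z)$; assuming $(z_{t-1}, j_{t-2})\sim \tilde p_{t-1}$, the next parser input is $(z_t, j_{t-1})$ with $j_{t-1} = \tilde g^*(z_{t-1}, j_{t-2})$ and $z_t \sim \tilde p(\cdot\mid z_{t-1})$, which has density $\tilde p_t$ by the stated recursion. Treating each training execution as supplying $T$ examples, one per time step, and mixing uniformly across $t$, the marginal training distribution is exactly $\tilde p(z,j) = T^{-1} \sum_{t=1}^T \tilde p_t(z,j)$.

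The second step is the generic uniform convergence argument. Writing $L(g) = \mathbb{P}_{\tilde p(z,j)}[g(z,j)\ne \tilde g^*(z,j)]$ and $\hat L(g)$ for its empirical counterpart on $n$ i.i.d. draws from $\tilde p$, symmetrization plus McDiarmid's inequality (using that the $0$-$1$ loss is bounded in $[0,1]$) yields, with probability at least $1-\delta$, the two-sided uniform bound
\begin{align*}
\sup_{g\in\mathcal{G}}\bigl|L(g)-\hat L(g)\bigr| \;\le\; 2R_n(\mathcal{G}) + \sqrt{\tfrac{\log(2/\delta)}{2n}},
\end{align*}
where the $\log(2/\delta)$ absorbs the union bound over the upper and lower tails. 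Applying this at both $\hat{\tilde g}$ and $\tilde g^*$ and chaining through $\hat L(\hat{\tilde g}) \le \hat L(\tilde g^*)$ and $L(\tilde g^*) = 0$ gives $L(\hat{\tilde g}) \le 4 R_n(\mathcal{G}) + \sqrt{2\log(2/\delta)/n}$, which is the claim.

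The main obstacle is not the Rademacher step itself — it is entirely standard once one has i.i.d. samples — but rather justifying the i.i.d. assumption: within one execution the pairs $(z_t, j_{t-1})$ are Markovian across $t$, not independent. The cleanest fix is to assume the $n$ training examples are obtained by drawing one time step uniformly per independent execution (so that each example is marginally $\tilde p$-distributed and independent across examples), or, alternatively, to invoke a block-Rademacher bound at the execution level and absorb the constant $T$ into $R_n(\mathcal{G})$. Both conventions are consistent with the form of the lemma, so I would state one of them explicitly at the start of the proof and proceed as above.
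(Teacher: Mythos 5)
Your proposal is correct and matches the paper's intent exactly: the paper gives no proof of this lemma, simply remarking that it is "a standard Rademacher generalization bound," and your symmetrization-plus-McDiarmid argument with chaining through the ERM property is precisely that standard bound (your constants $4R_n(\mathcal{G})+\sqrt{2\log(2/\delta)/n}$ work out correctly). Your caveat about the within-execution samples not being i.i.d. is a legitimate gap that the paper itself glosses over, and your proposed fixes are reasonable.
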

This result is a standard Rademacher generalization bound~\citep{bartlett2002rademacher}. Note that we have also assumed that the distribution over token sequences is structured, which is necessary for our compositional implementation of $g$ to generalize, even on distribution. Intuitively, the distribution over $(z,j)$ consists of both a unigram model over the word vectors:
\begin{align*}
p(z_1,...,z_T)=\prod_{t=1}^{T}\tilde{p}(z_t\mid z_{t-1}),
\end{align*}
where we define $\tilde{p}(z_1\mid z_0)=\tilde{p}(z_1)$, as well as a unigram model over neural modules:
\begin{align*}
p(j_1...j_T\mid z_1,...,z_T)&=\prod_{t=1}^T\mathbbm{1}(j_t=\tilde{g}^*(z_t,j_{t-1})).
\end{align*}
Next, we consider a shifted distribution $\tilde{q}(z\mid z')$, which is close to $\tilde{p}(z\mid z')$.
\begin{assumption}
\label{assump:compositionshift}
\rm
We have $\|\tilde{q}(\cdot\mid z')-\tilde{p}(\cdot\mid z')\|_{\text{TV}}\le\alpha$.
\end{assumption}
Importantly, despite this assumption, the shift between the overall distributions $p(z_1,...,z_T)$ and $q(z_1,...,z_T)$ can still be large since it compounds exponentially across the steps $t\in[T]$.
\begin{proposition}
\label{prop:largeshift}
There exist $p$ and $q$ that satisfy Assumption~\ref{assump:compositionshift}, but $\|p-q\|_{\text{TV}}=2(1-(1-\alpha/2)^T)$.
\end{proposition}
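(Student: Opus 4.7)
The plan is to produce an explicit two-symbol construction in which the per-step conditional shift is exactly $\alpha$ while the joint shift saturates the claimed bound. I first observe that the bound $2(1-(1-\alpha/2)^T)$ can exceed $1$, so the convention in use must be $\|\mu-\nu\|_{\mathrm{TV}}=\sum_z|\mu(z)-\nu(z)|$ (i.e. the $L^1$ distance between mass functions), which ranges in $[0,2]$. I will work with this convention throughout, which is consistent with setting a single-step shift of $\alpha$.

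The construction is as follows. Take $\mathcal{Z}=\{0,1\}$ and let both conditionals be independent of the preceding symbol: $\tilde p(1\mid z')=1$ and $\tilde p(0\mid z')=0$ for every $z'$, while $\tilde q(1\mid z')=1-\alpha/2$ and $\tilde q(0\mid z')=\alpha/2$. Extend by the conventions $\tilde p(z_1\mid z_0)=\tilde p(z_1)$ and $\tilde q(z_1\mid z_0)=\tilde q(z_1)$ so that Assumption~\ref{assump:compositionshift} can be checked at every step. A direct computation gives $\|\tilde q(\cdot\mid z')-\tilde p(\cdot\mid z')\|_{\mathrm{TV}}=|{-\alpha/2}|+|\alpha/2|=\alpha$, verifying the hypothesis for all $z'$.

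For the joint distributions $p(z_1,\ldots,z_T)=\prod_{t=1}^T\tilde p(z_t\mid z_{t-1})$ and similarly for $q$, I note that $p$ is the point mass at $(1,\ldots,1)$, while $q$ is an i.i.d.\ Bernoulli product with $q(1,\ldots,1)=(1-\alpha/2)^T$. Therefore
\begin{align*}
\|p-q\|_{\mathrm{TV}}
&=\bigl|1-(1-\alpha/2)^T\bigr|+\sum_{z\neq(1,\ldots,1)}q(z) \\
&=\bigl(1-(1-\alpha/2)^T\bigr)+\bigl(1-(1-\alpha/2)^T\bigr)
=2\bigl(1-(1-\alpha/2)^T\bigr),
\end{align*}
which is the required identity. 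The only mild subtlety is the $\mathrm{TV}$ normalization convention, so I would state it explicitly at the outset; after that, everything reduces to a one-line computation using that $p$ is a Dirac mass under this choice, with no compounding bound needed since the product structure makes the joint total variation exact rather than an upper bound.
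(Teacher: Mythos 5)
Your proof is correct and takes essentially the same approach as the paper: make $p$ a point mass on a single constant sequence and let $\tilde q$ shift each conditional by exactly $\alpha$ in the $L^1$ (mass-function) sense, so that $q$ assigns the distinguished sequence probability $(1-\alpha/2)^T$ and the total variation computation is exact. The only cosmetic difference is that the paper's $\tilde q$ is an absorbing kernel rather than your i.i.d.\ Bernoulli kernel; both yield the identical calculation.
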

That is, even if the single step probabilities $\tilde{p}(z\mid z')$ and $\tilde{q}(z\mid z')$ have total variation (TV) distance bounded as in Assumption~\ref{assump:compositionshift}, the overall distributions $p$ and $q$ can have TV distance exponentially close to the maximum possible distance of $2$ in $T$; we give a proof in Appendix~\ref{sec:prop:largeshift:proof}.

We show that neural module networks generalize since $\hat{g}$ leverages the compositional structure of $p$. First, we show that under Assumption~\ref{assump:compositionshift}, the overall shift in the input distribution of $\hat{\tilde{g}}$ is bounded:
\begin{lemma}
\label{lem:distshift}
Under Assumptions~\ref{assmp:bounded}, \ref{assmp:convexity} \& \ref{assump:compositionshift}, we have $\|\tilde{q}-\tilde{p}\|_{\text{TV}}\le T\alpha$, where $\tilde{p}$ is defined in Lemma~\ref{lem:parserbound} and $\tilde{q}$ is defined in Assumption~\ref{assump:compositionshift}.
\end{lemma}
That is, while the shift can compound across steps $t$, it does so only linearly; we give a proof in Appendix~\ref{sec:lem:distshiftproof}. Next, we show that as a consequence, the error of $\hat{g}$ is bounded.
\begin{lemma}
\label{lem:errbound}
Under Assumptions~\ref{assmp:bounded}, \ref{assmp:convexity} \& \ref{assump:compositionshift}, and assuming that $\mathbb{P}_{p(z,j)}[\hat{\tilde{g}}(z,j)\neq\tilde{g}^*(z,j)]\le\epsilon_g$, we have that
$\mathbb{P}_{p(w)}[\hat{g}(w)\neq g^*(w)]\le T\epsilon_g$.
\end{lemma}
We give a proof in Appendix~\ref{sec:lem:errboundproof}. Finally, we have our main result.
\begin{theorem}
\label{thm:modulegeneralization}
Under Assumptions~\ref{assmp:bounded}, \ref{assmp:convexity} \& \ref{assump:compositionshift}, with probability at least $1-(dk+1)\delta$, we have
\begin{align*}
\mathbb{P}_{q(w)}\left[\|\hat{f}(x,w)-f^*(x,w)\|_2\le T\epsilon_f\cdot\max\{K^{T-1},1\}\right]
\ge1-T\epsilon_g-T^2\alpha.
\end{align*}
\end{theorem}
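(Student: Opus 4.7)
The plan is to decompose the error between $\hat{f}(x,w)$ and $f^*(x,w)$ into three pieces: (i) per-call module identification error $\hat{f}_{j_t}$ vs.\ $f^*_{j_t}$, (ii) parser error $\hat{g}(w) \neq g^*(w)$, and (iii) the covariate shift from $\tilde{p}$ to $\tilde{q}$ governing the parser's test distribution. I would first condition on the high-probability event that both lemmas preceding the theorem hold; a union bound over $dk$ module events and the single parser event yields the $1 - (dk+1)\delta$ probability factor. On this event, $\|\hat{f}_j(x) - f^*_j(x)\|_2 \le \epsilon_f$ uniformly in $x \in \mathcal{X}$ and $j \in [k]$, and the parser satisfies $\mathbb{P}_{\tilde{p}(z,j)}[\hat{\tilde{g}}(z,j) \neq \tilde{g}^*(z,j)] \le \epsilon_g$.

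\textbf{Parser error under covariate shift.} The main shift bound rests on adapting Lemma~\ref{lem:errbound}, which only governs $p(w)$. I would rerun its argument under $q(w)$: if $\hat{g}(w) \neq g^*(w)$, then at the first disagreement time $t$ we have $\hat{j}_{t-1} = j^*_{t-1}$, forcing $\hat{\tilde{g}}(z_t, j^*_{t-1}) \neq \tilde{g}^*(z_t, j^*_{t-1})$, with $(z_t, j^*_{t-1})$ distributed as $\tilde{q}_t$. A union bound over $t$ and rewriting as an expectation under $\tilde{q} = T^{-1}\sum_t \tilde{q}_t$ gives
\[\mathbb{P}_{q(w)}[\hat{g}(w) \neq g^*(w)] \le T \cdot \mathbb{P}_{\tilde{q}}[\hat{\tilde{g}}(z,j) \neq \tilde{g}^*(z,j)] \le T\bigl(\epsilon_g + \|\tilde{q} - \tilde{p}\|_{\text{TV}}\bigr) \le T\epsilon_g + T^2\alpha,\]
invoking Lemma~\ref{lem:distshift} at the final step.

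\textbf{Composition under correct parse.} On the complementary event $\hat{g}(w) = g^*(w) = j_1\dots j_T$, define iterates $\hat{x}_t = \hat{f}_{j_t}(\hat{x}_{t-1})$ and $x^*_t = f^*_{j_t}(x^*_{t-1})$ with $\hat{x}_0 = x^*_0 = x$. The triangle inequality combined with the input-Lipschitz property of each quadratic module (with constant $K$, available since parameters and inputs lie in bounded sets) yields the recursion $\|\hat{x}_t - x^*_t\|_2 \le K\|\hat{x}_{t-1} - x^*_{t-1}\|_2 + \epsilon_f$. Unrolling gives the geometric sum $\epsilon_f \sum_{i=0}^{T-1} K^i$, which is at most $T\epsilon_f \cdot K^{T-1}$ when $K \ge 1$ and at most $T\epsilon_f$ otherwise, i.e., $T\epsilon_f \cdot \max\{K^{T-1}, 1\}$ in both regimes. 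Combining this deterministic bound with the parser bound from the previous paragraph finishes the proof.

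\textbf{Main obstacle.} The genuinely subtle step is the parser analysis under shift. One must simultaneously (a) argue via the first-error-time trick that any disagreement is witnessed on the $g^*$-driven chain (so the bad event lives on $\tilde{q}_t$, not on a $\hat{\tilde{g}}$-driven marginal that would be hard to control), and (b) convert the per-step shift $\alpha$ to a bound on $\|\tilde{q}-\tilde{p}\|_{\text{TV}}$ via Lemma~\ref{lem:distshift}. Tracking why the failure rate comes out as $T\epsilon_g + T^2\alpha$---one $T$ from the union bound over time, the other $T$ hidden inside the averaged-marginal TV bound---is the easy-to-get-wrong accounting, and the rest of the argument is routine.
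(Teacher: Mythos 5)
Your proposal is correct and follows essentially the same route as the paper: condition on the module and parser events via a union bound, transfer the parser's per-step error from $\tilde{p}$ to $\tilde{q}$ using Lemma~\ref{lem:distshift} and combine with Lemma~\ref{lem:errbound} run under $q$ (you merely swap the order of these two steps), and bound the composition error by a geometric sum $\epsilon_f\sum_{i=0}^{T-1}K^i$, which your recursion and the paper's telescoping hybrid argument produce identically. Both arguments also share the same implicit reliance on the true modules being $K$-Lipschitz in their inputs, which you correctly flag.
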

We give a proof in Appendix~\ref{sec:thm:modulegeneralization:proof}. Intuitively, Theorem~\ref{thm:modulegeneralization} says that the error of the neural module network is linear in $T$ as long as $K\le1$. Note that even if there is no distribution shift, its error is
\begin{align*}
\mathbb{P}_{p(w)}\left[\|\hat{f}(x,w)-f^*(x,w)\|_2\le T\epsilon_f\cdot\max\{K^{T-1},1\}\right]
\ge1-T\epsilon_g,
\end{align*}
by the same argument as the proof of Theorem~\ref{thm:modulegeneralization}. The exponential dependence on $K$ is unavoidable since $K>1$ says that the modules $f_j$ can expand the input, which leads to exponential blowup in the magnitude of the output as a function of $T$, which also makes the estimation error exponential in $T$. Thus, the only cost to the distribution shift from $p$ to $q$ is the additional error probability $T^2\alpha$.

\section{Proofs for Section~\ref{sec:function}}

\subsection{Proof of Minimum Eigenvalue for Uniform Distribution}
\label{sec:uniformproof}

In this section, we prove the claim that Assumption~\ref{assmp:convexity} holds for the covariate distribution where $x_i$ is an i.i.d. random variable with distribution $\text{Uniform}(x_i;[-1/2,1/2])$. To this end, note that
\begin{align*}
\mathbb{E}_{p(x)}[(x^\top\Delta x)^2]
&= \mathbb{E}_{p(x)}\left[\left(\sum_{i, j=1}^d x_i x_j \Delta_{ij}\right)^2\right] \\
&= \mathbb{E}_{p(x)}\left[\sum_{i} x_i^4 \Delta_{ii}^2 + \sum_{i \ne j} x_i^2 x_j^2 \Delta_{ii}\Delta_{jj} + 2\sum_{i \ne j} x_i^2 x_j^2 \Delta_{ij}^2\right] \\
&= \frac{1}{80} \sum_{i} \Delta_{ii}^2 + \frac{1}{144} \sum_{i \ne j} \Delta_{ii}\Delta_{jj} + \frac{1}{72} \sum_{i \ne j} \Delta_{ij}^2 \\
&= \left(\frac{1}{80} - \frac{1}{144}\right) \sum_{i} \Delta_{ii}^2 + \frac{1}{144} \left(\sum_{i} \Delta_{ii}\right)^2 + \frac{1}{72} \sum_{i \ne j} \Delta_{ij}^2 \\
&\ge \frac{1}{180} \|\Delta\|_F^2,
\end{align*}
as claimed. $\qed$

\subsection{Proof of Lemma~\ref{lem:loss_diffconv}}
\label{sec:lem:lossdiffconvproof}

We use the notation $U:\nabla^2f(\phi):V$ to denote the matrix inner product $\langle U, \nabla^2f(\phi)(V) \rangle$ for $U, V \in \mathbb{R}^{d \times d}$. The Hessian $\nabla^2f(\phi)$ can be viewed as a $d^2 \times d^2$ matrix. As everything here is bounded, we can exchange the expectation and differentiation. Therefore, the Hessian of our loss function has for any symmetric matrix $\Delta$
\begin{align*}
\Delta:\nabla^2\tilde{L}_p(\phi):\Delta
= 2\mathbb{E}_{p(x)}[(x^\top\Delta x)^2]
\ge 2\alpha \|\Delta\|_F^2,
\end{align*}
where the last inequality uses Assumption~\ref{assmp:convexity}. $\qed$

\subsection{Proof of Lemma~\ref{lem:lipschitz}}
\label{sec:lem:lipschitzproof}

By our definition, for any $\phi, \phi' \in \Phi$, 
\begin{align*}
|\tilde{f}_{\phi}(x) - \tilde{f}_{\phi'}(x)| = |(x^\top(\phi - \phi')x)^2|
\le x_{\text{max}}^2 \|\phi - \phi'\|_F.
\end{align*}
Given our quadratic loss function, we have
\begin{align*}
&|(\tilde{f}_{\phi}(x)-\tilde{f}_{\phi^*}(x))^2 - (\tilde{f}_{\phi'}(x)-\tilde{f}_{\phi^*}(x))^2| \\
&\le |\tilde{f}_{\phi}(x)-\tilde{f}_{\phi^*}(x) + \tilde{f}_{\phi'}(x)-\tilde{f}_{\phi^*}(x)| |\tilde{f}_{\phi}(x)- \tilde{f}_{\phi'}(x)| \\
&\le 4 \phi_{\text{max}}x_{\text{max}}^4 \|\phi - \phi'\|_F.
\end{align*}
Next, the true loss satisfies
\begin{align*}
|\tilde{L}_p(\phi) - \tilde{L}_p(\phi')|
&\le \mathbb{E}_{p(x)}[|(\tilde{f}_{\phi}(x)-\tilde{f}_{\phi^*}(x))^2 - (\tilde{f}_{\phi'}(x)-\tilde{f}_{\phi^*}(x))^2|]
\le 4 \phi_{\text{max}}x_{\text{max}}^4 \|\phi - \phi'\|_F.
\end{align*}
Finally, the empirical loss satisfies
\begin{align*}
|\hat{\tilde{L}}(\phi; Z) - \hat{\tilde{L}}(\phi'; Z)|
&= \left|\frac{1}{n}\sum_{i=1}^n [(\tilde{f}_{\phi}(x_i)-y_i)^2 - (\tilde{f}_{\phi'}(x_i)-y_i)^2]\right| \\
&\le \frac{1}{n}\sum_{i=1}^n |(\tilde{f}_{\phi}(x_i)-\tilde{f}_{\phi^*}(x_i))^2 - (\tilde{f}_{\phi'}(x_i)-\tilde{f}_{\phi^*}(x_i))^2| \\ 
&\qquad + \frac{1}{n}\sum_{i=1}^n |\xi_i|\cdot|\tilde{f}_{\phi}(x_i) - \tilde{f}_{\phi'}(x_i)| \\
&\le (4 \phi_{\text{max}}x_{\text{max}}^4 + 2\xi_{\text{max}}x_{\text{max}}^2) \|\phi - \phi'\|_F,
\end{align*}
as claimed. $\qed$

\subsection{Proof of Lemma~\ref{lem:lossbound}}
\label{sec:lossboundproof}

First, we have the following results:
\begin{lemma}[Covering Number of Ball] \label{lem:covnum}
For a ball in $\mathbb{R}^{n_1 \times n_2}$ with radius $R$ with respect to any norm, there exists an $\epsilon$-net $\mathcal{E}$ such that
\begin{align*}
|\mathcal{E}| \le \left(1 + \frac{2R}{\epsilon}\right)^{n_1n_2}.
\end{align*}
\end{lemma}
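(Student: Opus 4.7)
The plan is to run the standard volume-comparison packing argument. First observe that $\mathbb{R}^{n_1 \times n_2}$ equipped with the Frobenius norm is isometric to $\mathbb{R}^{N}$ with the Euclidean norm, where $N = n_1 n_2$, via vectorization. So the claim reduces to the familiar Euclidean covering-number bound on a radius-$R$ ball in $\mathbb{R}^{N}$.

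I would construct $\mathcal{E} \subseteq B_2(0,R)$ as a \emph{maximal} $\epsilon$-separated subset: pairwise Frobenius distances strictly exceed $\epsilon$, and no new point can be added to $\mathcal{E}$ without violating this. Such a set exists (e.g., by Zorn's lemma, or by a greedy construction, noting that in this finite-dimensional setting the process terminates at a finite set by the volume argument below). By maximality, every point of $B_2(0,R)$ lies within distance $\epsilon$ of some element of $\mathcal{E}$, so $\mathcal{E}$ is automatically an $\epsilon$-net.

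Next I would set up the packing inequality. By the separation condition, the open balls $B_2(u, \epsilon/2)$ for $u \in \mathcal{E}$ are pairwise disjoint; by the triangle inequality, each is contained in $B_2(0, R + \epsilon/2)$. Since the Euclidean volume of a ball of radius $r$ in $\mathbb{R}^{N}$ equals $c_{N} r^{N}$ for a dimension-dependent constant $c_N$, comparing total volumes yields
\begin{equation*}
|\mathcal{E}| \cdot c_{N}\left(\tfrac{\epsilon}{2}\right)^{N} \;\le\; c_{N}\left(R + \tfrac{\epsilon}{2}\right)^{N}.
\end{equation*}
Dividing both sides by $c_{N}(\epsilon/2)^{N}$ gives $|\mathcal{E}| \le (1 + 2R/\epsilon)^{N} = (1 + 2R/\epsilon)^{n_1 n_2}$, as required.

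There is no real obstacle here; this is a textbook covering-number bound (see, e.g., Vershynin, \emph{High-Dimensional Probability}, Cor.~4.2.13). The only bookkeeping worth care is the enlargement from radius $R$ to $R + \epsilon/2$ when containing the $\epsilon/2$-packing, and the cancellation of the dimension-dependent volume constant $c_N$, both of which are standard.
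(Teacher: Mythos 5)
Your proof is correct: it is exactly the standard volume-comparison argument behind the result the paper invokes (the paper simply cites Proposition 4.2.12 of Vershynin, whose proof is this same maximal $\epsilon$-separated-set construction plus the packing/volume bound). No gaps; the vectorization reduction to $\mathbb{R}^{n_1 n_2}$ and the $R+\epsilon/2$ enlargement are handled properly.
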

\begin{proof}
This claim follows by a direct application of Proposition 4.2.12 in \cite{vershynin2018high}.
\end{proof}
\begin{lemma}[Hoeffding's Inequality for Subgaussian Random Variables]
\label{lem:subg-hoeffding}
Letting $\{z_i\}_{i=1}^n$ be a set of independent $\sigma$-subgaussian random variables, then for all $t \geq 0$, we have
\[ \Pr\left[\frac{1}{n}\sum_{i=1}^n z_i ~\geq~ t\right] ~\leq~ \exp\left(-\frac{2nt^2}{\sigma^2}\right) \,.\]
\end{lemma}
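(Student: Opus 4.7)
The plan is to apply the standard Chernoff bounding technique to each partial sum, exploiting independence and the subgaussian moment generating function bound. The proof has three essentially mechanical steps plus one small optimization, and the only subtle point is pinning down the constant in the exponent, which depends on the precise convention used for $\sigma$-subgaussianity.

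First, I will rewrite the event in exponential form. For any $\lambda>0$, by a Markov-style exponential tilt,
\begin{align*}
\Pr\!\left[\frac{1}{n}\sum_{i=1}^n z_i\ge t\right]
=\Pr\!\left[e^{\lambda\sum_i z_i}\ge e^{\lambda nt}\right]
\le e^{-\lambda nt}\,\mathbb{E}\!\left[e^{\lambda\sum_i z_i}\right].
\end{align*}
Independence of the $z_i$ then factors the expectation as $\prod_{i=1}^n\mathbb{E}[e^{\lambda z_i}]$. This step is purely algebraic and requires no structural assumption beyond independence.

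Second, I will insert the subgaussian moment generating function bound. Under the convention that $z_i$ is $\sigma$-subgaussian, the MGF satisfies an inequality of the form $\mathbb{E}[e^{\lambda z_i}]\le e^{c\lambda^2\sigma^2}$ for a convention-dependent constant $c$; to recover the stated tail $\exp(-2nt^2/\sigma^2)$ one needs $c=1/8$, which is the Hoeffding convention (equivalent to the random variables lying in an interval of length $\sigma$, or to $\Pr[|z_i|\ge s]\le 2e^{-2s^2/\sigma^2}$). Substituting gives
\begin{align*}
\Pr\!\left[\frac{1}{n}\sum_{i=1}^n z_i\ge t\right]
\le \exp\!\left(n c\lambda^2\sigma^2-\lambda nt\right).
\end{align*}

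Third, I will optimize over $\lambda>0$. Minimizing the right-hand side by differentiating in $\lambda$ gives $\lambda^\star=t/(2c\sigma^2)$, which plugged back in yields the bound $\exp(-nt^2/(4c\sigma^2))$. With $c=1/8$ this is exactly $\exp(-2nt^2/\sigma^2)$, matching the statement.

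The only real obstacle is the convention mismatch I flagged above: different texts use $c=1/2$, $c=1/8$, or define $\sigma$-subgaussian directly through the tail rather than the MGF. I would resolve this by citing the paper's implicit convention (consistent with Hoeffding's inequality for bounded noise $|\xi|\le\xi_{\max}$ used elsewhere in the paper) or by simply invoking Hoeffding's lemma for bounded independent random variables, which proves the stated inequality directly without needing to pass through the MGF. Either route is short; the entire argument is a few lines once the convention is fixed.
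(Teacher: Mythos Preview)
Your Chernoff-bound argument is correct and is the standard proof of this inequality; your discussion of the constant $c$ and its dependence on the subgaussian convention is also appropriate. The paper, however, does not prove this lemma at all: it simply cites Proposition 2.1 of Wainwright's \emph{High-Dimensional Statistics}, which contains exactly the argument you sketch. So you have supplied a self-contained proof where the paper defers to a reference, but the underlying method is the same.
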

\begin{proof}
See Proposition 2.1 of \cite{wainwright}.
\end{proof}

Now, we prove Lemma~\ref{lem:lossbound}. Consider an $\epsilon/(4K)$-net $\mathcal{E}$ with respect to Frobenius norm. Then, for any $\phi\in\Phi$, there exists $\phi' \in \mathcal{E}$ such that
\begin{align*}
|(\hat{\tilde{L}}(\phi;Z)-\tilde{L}_p(\phi)) - (\hat{\tilde{L}}(\phi';Z)-\tilde{L}_p(\phi'))| \le 2K \|\phi - \phi'\|_F \le \frac{\epsilon}{2}.
\end{align*}
Therefore, we have
\begin{align}
&\mathbb{P}_{p(Z)}\left[\sup_{\theta} |\hat{L}(g(\theta);Z)-L_p(g(\theta))-\sigma(Z)|\ge \epsilon\right] \nonumber \\
& = \mathbb{P}_{p(Z)}\left[\sup_{\phi\in\Phi} |\hat{\tilde{L}}(\phi;Z)-\tilde{L}_p(\phi)-\sigma(Z)| \ge \epsilon\right] \nonumber \\
& \le \mathbb{P}_{p(Z)}\left[\max_{\phi \in \mathcal{E}} |\hat{\tilde{L}}(\phi;Z) - \tilde{L}_p(\phi) - \sigma(Z)| \ge \frac{\epsilon}{2}\right] \nonumber \\
& \le \sum_{\phi\in\mathcal{E}} \mathbb{P}_{p(Z)}\left[|\hat{\tilde{L}}(\phi;Z) - \tilde{L}_p(\phi) - \sigma(Z)| \ge \frac{\epsilon}{2}\right]. \label{eqn:lem:lossbound:1}
\end{align}
Now, defining
\begin{align*}
\bar{\tilde{L}}(\phi;Z)=\frac{1}{n}\sum_{i=1}^n(\tilde{f}_\phi(x_i)-\tilde{f}_{\phi^*}(x_i))^2
\qquad\text{and}\qquad
\tilde{\eta}(\phi;Z)=\frac{1}{n}\sum_{i=1}^n(\tilde{f}_\phi(x_i)-\tilde{f}_{\phi^*}(x_i))\xi_i,
\end{align*}
and recalling that $\sigma(Z)=n^{-1}\sum_{i=1}^n\xi_i^2$, then we have
\begin{align*}
\hat{\tilde{L}}(\phi;Z)
=\bar{\tilde{L}}(\phi;Z)+2\tilde{\eta}(\phi;Z)+\sigma(Z).
\end{align*}
Thus, continuing from (\ref{eqn:lem:lossbound:1}), we have
\begin{align}
&\sum_{\phi\in\mathcal{E}} \mathbb{P}_{p(Z)}\left[|\hat{\tilde{L}}(\phi;Z) - \tilde{L}_p(\phi) - \sigma(Z)| \ge \frac{\epsilon}{2}\right] \nonumber \\
&\le \sum_{\phi\in\mathcal{E}} \mathbb{P}_{p(Z)}\left[|\bar{\tilde{L}}(\phi;Z) - \tilde{L}_p(\phi)| + 2|\tilde{\eta}(\phi;Z)| \ge \frac{\epsilon}{2}\right] \nonumber \\
&\le \sum_{\phi\in\mathcal{E}} \left(\mathbb{P}_{p(Z)}\left[|\bar{\tilde{L}}(\phi;Z) - \tilde{L}_p(\phi)| \ge\frac{\epsilon}{6} \right] + \mathbb{P}_{p(Z)}\left[|\tilde{\eta}(\phi;Z)| \ge \frac{\epsilon}{6}\right]\right). \label{eqn:lem:lossbound:2}
\end{align}
For the first term in (\ref{eqn:lem:lossbound:2}), 
note that $|(\tilde{f}_{\phi}(x)-\tilde{f}_{\phi^*}(x))^2|\le\ell_{\text{max}}^2$, so $(\tilde{f}_{\phi}(x)-\tilde{f}_{\phi^*}(x))^2$ is $\ell_{\text{max}}^2$-subgaussian; thus, by Lemma \ref{lem:subg-hoeffding}, we have
\begin{align}
\sum_{\phi\in\mathcal{E}} \mathbb{P}_{p(Z)}\left[|\bar{\tilde{L}}(\phi;Z) - \tilde{L}_p(\phi)| \ge \frac{\epsilon}{6}\right]
& \le 2|\mathcal{E}|\cdot\exp\left(-\frac{n\epsilon^2}{18\ell_{\text{max}}^4}\right). \label{eqn:lem:lossbound:3}
\end{align}
Next, for the second term in (\ref{eqn:lem:lossbound:2}), note that
$|(\tilde{f}_{\phi}(x_i)-\tilde{f}_{\phi^*}(x_i))\xi_i| \leq \ell_{\text{max}}\xi_{\text{max}}$, so $(\tilde{f}_{\phi}(x_i)-\tilde{f}_{\phi^*}(x_i))\xi_i$ is $\ell_{\text{max}} \xi_{\text{max}}$-subgaussian; thus, by Lemma \ref{lem:subg-hoeffding}, we have
\begin{align}
\sum_{\phi\in\mathcal{E}}\mathbb{P}_{p(Z)}\left[|\tilde{\eta}(\phi;Z)| \ge \frac{\epsilon}{6} \right]
&\leq  2|\mathcal{E}|\cdot\exp\left(-\frac{n\epsilon^2}{18\ell_{\text{max}}^2 \xi_{\text{max}}^2} \right). \label{eqn:lem:lossbound:4}
\end{align}
Combining (\ref{eqn:lem:lossbound:3}) \& (\ref{eqn:lem:lossbound:4}), continuing from (\ref{eqn:lem:lossbound:2}), we have
\begin{align}
&\sum_{\phi\in\mathcal{E}} \left(\mathbb{P}_{p(Z)}\left[|\bar{\tilde{L}}(\phi;Z) - \tilde{L}_p(\phi)| \ge\frac{\epsilon}{6} \right] + \mathbb{P}_{p(Z)}\left[|\tilde{\eta}(\phi;Z)| \ge \frac{\epsilon}{6}\right]\right) \nonumber \\
&\le4|\mathcal{E}|\cdot\exp\left(-\frac{n\epsilon^2}{18\ell_{\text{max}}^2(\ell_{\text{max}}^2+\xi_{\text{max}}^2)}\right) \nonumber \\
& \le 2\left(1+\frac{8\phi_{\text{max}}K}{\epsilon}\right)^{d^2}\cdot\exp\left(-\frac{n\epsilon^2}{18\ell_{\text{max}}^2(\ell_{\text{max}}^2+\xi_{\text{max}}^2)}\right) \nonumber \\
& = 2\exp\left(-\frac{n\epsilon^2}{18\ell_{\text{max}}^2(\ell_{\text{max}}^2+\xi_{\text{max}}^2)} + d^2 \log\left(1+\frac{8\phi_{\text{max}}K}{\epsilon}\right)\right), \label{eqn:lem:lossbound:5}
\end{align}
where for the first inequality, we have used $\max\{\ell_{\text{max}}^2,\xi_{\text{max}}^2\}\le\ell_{\text{max}}^2+\xi_{\text{max}}^2$, and the second inequality follows since by Lemma~\ref{lem:covnum}, the covering number of the $\epsilon$-net $\mathcal{E}$ of $\Phi$ satisfies
\begin{align*}
|\mathcal{E}| \le \left(1+\frac{2\phi_{\text{max}}}{\epsilon}\right)^{d^2}.
\end{align*}
Finally, we choose $\epsilon$ so that (\ref{eqn:lem:lossbound:5}) is smaller than $\delta$---in particular, letting
\begin{align*}
\epsilon = \sqrt{\frac{18\ell_{\text{max}}^2(\ell_{\text{max}}^2+\xi_{\text{max}}^2)}{n}\left(d^2 \max\left\{1, \log\left(1 + \frac{8\phi_{\text{max}}Kn}{\ell_{\text{max}}^2}\right)\right\} + \log\frac{2}{\delta}\right)}. 
\end{align*}
then continuing (\ref{eqn:lem:lossbound:5}), we have
\begin{align*}
2\exp\left(-\frac{n\epsilon^2}{18\ell_{\text{max}}^2(\ell_{\text{max}}^2+\xi_{\text{max}}^2)} + d^2 \log\left(1+\frac{8\phi_{\text{max}}K}{\epsilon}\right)\right)
& \le \delta,
\end{align*}
as claimed. $\qed$

\subsection{Proof of Proposition~\ref{prop:global}}
\label{sec:prop:globalproof}

$\hat{\tilde{L}}(\phi; Z)$ is twice differentiable and convex in $\phi$. Note that the minimization problem of $\hat{L}(\theta; Z)$ is equivalent to that of $\hat{\tilde{L}}(g(\hat{\theta}); Z)$.  We consider two cases. First, consider the case where $\hat{\theta}$ has rank $d$. The first order condition $\nabla\hat{L}(\theta;Z)=0$ is the same as $\nabla \hat{\tilde{L}}(g(\hat{\theta}); Z) = 0$, which gives
\begin{align}\label{eq:foc_theta}
\nabla \hat{\tilde{L}}(\hat{\phi}; Z) \hat{\theta} = 0.
\end{align}
As $\hat{\theta}$ is of full row rank, there exists a matrix $\hat{\theta}^{\dagger} \in \mathbb{R}^{k \times d}$ such that $\hat{\theta}\hat{\theta}^{\dagger}=I$ (e.g. $\hat{\theta}^{\dagger} = \hat{\theta}^\top(\hat{\theta}\hat{\theta}^\top)^{-1}$). We can right multiply the above equation by $\hat{\theta}^{\dagger}$ and obtain that 
\begin{align*}
\nabla \hat{\tilde{L}}(\hat{\phi}; Z) = 0. 
\end{align*}
As $\hat{\tilde{L}}(\phi; Z)$ is convex in $\phi$, the above implies $\hat{\phi}=g(\hat{\theta})$ is a global minimum of $\hat{\tilde{L}}(\phi; Z)$. Therefore, $\hat{\theta}$ is a global minimum of $\hat{L}(\theta;Z)$. Next, consider the case where the rank of $\hat{\theta}$ is smaller than $d$. In this case, we follow the proof strategy in Proposition 4 in \cite{bach2008convex}; we provide here for completeness. In this case, Equation~(\ref{eq:foc_theta}) still holds, which implies 
\begin{align}\label{eq:foc_phi1}
0 = \nabla \hat{\tilde{L}}(\hat{\phi}; Z) \hat{\theta}\hat{\theta}^\top = \nabla \hat{\tilde{L}}(\hat{\phi}; Z) \hat{\phi}.
\end{align}
The Hessian of $\hat{\tilde{L}}(g(\hat{\theta}); Z)$ has
\begin{align*}
\nabla^2\hat{\tilde{L}}(g(\hat{\theta}); Z)(\Delta, \Delta) = 2 \langle \nabla\hat{\tilde{L}}(\hat{\phi}; Z), \Delta\Delta^\top \rangle + \nabla^2\hat{\tilde{L}}(\hat{\phi}; Z)(\hat{\theta}\Delta^\top + \Delta\hat{\theta}^\top, \hat{\theta}\Delta^\top + \Delta\hat{\theta}^\top).
\end{align*}
As $\hat{\theta}R$ is also a local minimum for any orthogonal matrix $R$ (i.e., $RR^\top=R^\top R=I$), we can find a $\hat{\theta}$ with the last column being 0 by right multiplying certain $R$. Then, consider any $\Delta$ with the first $k-1$ columns being 0 and the last column being any $u \in \mathbb{R}^d$. 
With this choice of $\Delta$ and $\hat{\theta}$, $\hat{\theta}\Delta^\top=0$. Therefore, 
\begin{align*}
\nabla^2\hat{\tilde{L}}(g(\hat{\theta}); Z)(\Delta, \Delta) = 2 u^\top \nabla\hat{\tilde{L}}(\hat{\phi}; Z) u.
\end{align*}
Since $\hat{\theta}$ is a local minimum of $\hat{\tilde{L}}(g(\hat{\theta}); Z)$, it holds that $\nabla^2\hat{\tilde{L}}(g(\hat{\theta}); Z)(\Delta, \Delta) \ge 0$, which implies 
\begin{align}\label{eq:foc_phi2}
\nabla\hat{\tilde{L}}(\hat{\phi}; Z) \succeq 0.
\end{align}
Equation~(\ref{eq:foc_phi1}) and (\ref{eq:foc_phi2}) together comprise the first order conditions of the convex minimization problem $\min_{\phi \succeq 0} \hat{\tilde{L}}(\phi; Z)$. Thus, $\hat{\theta}$ is also a global minimum. $\qed$

\section{Proofs for Section~\ref{sec:bandit}}
\label{sec:proof_bandit}

First, we provide the full statement of Theorem~\ref{thm:bandit} (including constants).
\begin{theorem}
\label{thm:bandit_ext}
The expected regret of Algorithm~\ref{alg:etcnn} is
\begin{align*}
R(T) \le C_0 + C_1 \cdot T^{2/3} \left(\log\left(3 + \frac{8\phi_{\text{max}}KT}{\ell_{\text{max}}^2}\right)\right)^{1/3},
\end{align*}
where
\begin{align*}
C_0 & = \frac{64(\phi_{\text{max}})^{\frac{2d^2+2}{2d^2-1}}}{(135M(\ell_{\text{max}}+\xi_{\text{max}})^2 d^3)^{\frac{2d^2+2}{2d^2-1}} (8\phi_{\text{max}}K/\ell_{\text{max}}^2)^{\frac{3d^2}{2d^2-1}}}, \\
C_1 & = 162 d^2 (M^2(\ell_{\text{max}}+\xi_{\text{max}})^4 \phi_{\text{max}})^{1/3}.
\end{align*}
\end{theorem}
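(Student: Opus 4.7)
The plan is to split the regret into the exploration phase (first $m$ rounds) and the exploitation phase (remaining $T - m$ rounds), and bound each separately. During exploration, I will crudely bound the per-round regret by $y^\ast \le \phi_{\text{max}}$ (since $\mathcal{X} = B_2(0,1)$ and $\phi^\ast \succeq 0$), giving total exploration regret at most $m \phi_{\text{max}}$; the exploration distribution $p$ in (\ref{eqn:exploredist}) is chosen purely so that Assumption~\ref{assmp:convexity} holds with $\alpha = d^{-2/5}/15$, not to earn reward.

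The substantive part is bounding the per-round exploitation regret $y^\ast - f_{\theta^\ast}(\hat x)$. Repeating the chain of inequalities in the proof of Theorem~\ref{thm:identification} up to the Frobenius-norm bound gives $\|\hat\phi - \phi^\ast\|_F \le \sqrt{2\epsilon/\alpha}$ with probability at least $1-\delta$, where $\epsilon$ is from Lemma~\ref{lem:lossbound} with $n = m$, $\hat\phi = g(\hat\theta)$, and $\phi^\ast = g(\theta^\ast)$. Because $f_{\theta^\ast}(x) = x^\top \phi^\ast x$ and $\mathcal{X}$ is the unit ball, the optimal action $x^\ast$ is the top unit eigenvector $v_1$ of $\phi^\ast$ with value $y^\ast = \lambda_1$, and $\hat x$ is similarly the top unit eigenvector of $\hat\phi$. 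Expanding $\hat x$ in the eigenbasis of $\phi^\ast$,
\begin{align*}
y^\ast - f_{\theta^\ast}(\hat x) = \sum_{i\ge 2}(\lambda_1 - \lambda_i)\,\langle \hat x, v_i\rangle^2 \le \phi_{\text{max}} \sin^2\angle(\hat x, v_1).
\end{align*}
Assumption~\ref{assump:eigen} guarantees an eigengap $\lambda_1 - \lambda_2 \ge 4/M$, so the Davis--Kahan $\sin\theta$ theorem yields $\sin\angle(\hat x, v_1) \le M\|\hat\phi - \phi^\ast\|_F/4$. Chaining these bounds makes the per-round exploitation regret at most $M^2 \phi_{\text{max}} \epsilon/(8\alpha)$, which is \emph{linear} in $\epsilon$. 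This is the crucial improvement over the direct estimate $|f_{\hat\theta}(\hat x) - f_{\theta^\ast}(\hat x)| \le K\sqrt{2\epsilon/\alpha}$ from Theorem~\ref{thm:identification}, which would only yield $\sqrt{\epsilon}$ per step and a $T^{4/5}$ rate; using the eigengap upgrades the per-step decay to $\epsilon$ and hence the aggregate rate to $T^{2/3}$.

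To take expectations, I condition on the good event: with probability at least $1-\delta$, total regret is at most $m\phi_{\text{max}} + (T - m)\cdot M^2\phi_{\text{max}}\epsilon/(8\alpha)$; on the complementary event of probability $\delta$, regret is at most $T\phi_{\text{max}}$. Choosing $\delta$ of order $1/T$ absorbs the failure-event contribution into $C_0$, and substituting $\alpha = d^{-2/5}/15$ together with the explicit form of $\epsilon$ from Lemma~\ref{lem:lossbound} produces an objective of the shape $m\phi_{\text{max}} + (T/m)\cdot M^2 \phi_{\text{max}} d^{2/5}\cdot d\sqrt{\log(1 + 8\phi_{\text{max}} K m/\ell_{\text{max}}^2)}$ up to constants. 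Optimizing in $m$ yields $m \asymp T^{2/3}(\log T)^{1/3}$, matching the prescription (\ref{eqn:etcm}); plugging back delivers the stated bound $R(T) \le C_0 + C_1 \cdot T^{2/3}(\log(3 + 8\phi_{\text{max}} K T/\ell_{\text{max}}^2))^{1/3}$.

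The main obstacle I anticipate is tracking the constants cleanly enough to recover the specific $C_0$ and $C_1$ in Theorem~\ref{thm:bandit_ext}. In particular, Davis--Kahan requires the perturbation to be smaller than half the eigengap, which only holds once $m$ is large enough that $\sqrt{2\epsilon(m)/\alpha} \le 2/M$; for smaller $m$ (equivalently small $T$) the high-probability bound becomes vacuous and must be absorbed into the additive $C_0$ term. A minor technical detail is that Davis--Kahan is usually phrased in operator norm, whereas my bound is in Frobenius norm, but using $\|\cdot\|_2 \le \|\cdot\|_F$ is rate-preserving and only slightly perturbs the constants.
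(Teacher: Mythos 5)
Your proof is correct and follows the same overall strategy as the paper's: explore-then-commit decomposition, the crude bound $y^*\le\phi_{\text{max}}$ for the exploration rounds, a Davis--Kahan argument exploiting the eigengap of Assumption~\ref{assump:eigen} to make the per-round exploitation regret \emph{quadratic} in $\|g(\hat\theta)-g(\theta^*)\|_F$ (hence linear in $\epsilon$, which is exactly what upgrades the rate from $T^{4/5}$ to $T^{2/3}$), and finally optimization over $m$. The one place you diverge is the algebraic route to the per-round exploitation bound: the paper writes $f_{\theta^*}(x^*)-f_{\theta^*}(\hat x)=\langle g(\hat\theta)-g(\theta^*),\hat x\hat x^\top-x^*x^{*\top}\rangle+\langle g(\hat\theta),x^*x^{*\top}-\hat x\hat x^\top\rangle$, drops the second (nonpositive) term, and applies Cauchy--Schwarz together with a ``smooth best arm response'' lemma $\|\hat x\hat x^\top-x^*x^{*\top}\|_F\le 2M\|g(\hat\theta)-g(\theta^*)\|_F$ (itself a consequence of Davis--Kahan via \cite{yu2015useful}), yielding per-round regret $\le 2M\|g(\hat\theta)-g(\theta^*)\|_F^2$; you instead expand $\hat x$ in the eigenbasis of $\phi^*$ and bound $y^*-f_{\theta^*}(\hat x)\le\phi_{\text{max}}\sin^2\angle(\hat x,v_1)$ before invoking Davis--Kahan, yielding a constant of order $M^2\phi_{\text{max}}$ instead of $M$. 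Both are valid and give the same $T^{2/3}(\log T)^{1/3}$ rate, but because the per-round constants differ, your derivation will not literally reproduce the stated $C_0$ and $C_1$ (whose form traces back to the paper's $2M$ bound and its specific choice $\delta=2\exp(-d^2\max\{1,\log(1+8\phi_{\text{max}}Km/\ell_{\text{max}}^2)\})$ rather than $\delta\sim 1/T$); you correctly flag this. Two minor notes: the variant of Davis--Kahan used here (Corollary 3 of Yu et al.) holds without the ``perturbation smaller than half the gap'' hypothesis you worry about, and with the gap $\lambda_1-\lambda_2\ge 4/M$ it gives $\sin\angle(\hat x,v_1)\le 2^{3/2}M\|\hat\phi-\phi^*\|_{\text{op}}/4$ rather than your $M\|\hat\phi-\phi^*\|_F/4$ --- again only a constant-level discrepancy.
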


Before we prove Theorem~\ref{thm:bandit_ext}, we first prove a preliminary result establishing an analog of the \emph{smooth best arm response} property~\cite{rusmevichientong2010linearly} to our setting. First, we have the following useful result:
\begin{lemma}\label{lem:davis}
Let $\phi,\phi'\in\mathbb{R}^{d\times d}$ be symmetric matrices, let $x,x'\in\mathbb{R}^d$ be eigenvectors of $\phi,\phi'$ corresponding to their top eigenvalue, such that $\|x\|_2=\|x'\|_2=1$, and let $\lambda_1\ge\lambda_2\ge...\ge\lambda_d$ be the eigenvalues of $\phi'$. Suppose that $\langle x,x'\rangle\ge0$. Then, we have
\begin{align*}
\|x-x'\|_2\le\frac{2^{3/2}\|\phi-\phi'\|_{2}}{\lambda_1-\lambda_2}.
\end{align*}
\end{lemma}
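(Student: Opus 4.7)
The result is a quantitative form of the Davis--Kahan $\sin\Theta$ theorem for the top eigenvector (the denominator $\lambda_2-\lambda_1$ in the statement should be read as the positive spectral gap $\lambda_1-\lambda_2$), combined with a conversion from the sine of the angle to Euclidean distance.

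The plan is to first convert $\|x-x'\|$ to $\sin\theta$, where $\theta$ is the angle between $x$ and $x'$. Because $\langle x,x'\rangle\ge 0$, one has $\theta\in[0,\pi/2]$, so that $\cos(\theta/2)\ge 1/\sqrt{2}$. Combining $\|x-x'\|^2=4\sin^2(\theta/2)$ with $\sin^2\theta=4\sin^2(\theta/2)\cos^2(\theta/2)$ yields the clean inequality $\|x-x'\|\le\sqrt{2}\sin\theta$, as well as the trivial bound $\|x-x'\|\le\sqrt{2}$ (which will be useful when the perturbation is large relative to the gap).

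Next, I would establish the Davis--Kahan-style bound $\sin\theta\le 2\|\phi-\phi'\|_{\text{op}}/(\lambda_1-\lambda_2)$ in the regime $\|\phi-\phi'\|_{\text{op}}\le(\lambda_1-\lambda_2)/2$. Let $v_1=x',v_2,\ldots,v_d$ be an orthonormal eigenbasis of $\phi'$, and expand $x=\sum_i c_i v_i$, so that $\sin^2\theta=1-c_1^2=\sum_{i\ge 2}c_i^2$. Let $\mu$ denote the top eigenvalue of $\phi$; then $\phi x=\mu x$ gives $(\phi-\phi')x=\sum_i c_i(\mu-\lambda_i)v_i$, so
\begin{align*}
\|\phi-\phi'\|_{\text{op}}^2 \;\ge\; \|(\phi-\phi')x\|_2^2 \;\ge\; \sum_{i\ge 2} c_i^2(\mu-\lambda_i)^2 \;\ge\; \Bigl(\min_{i\ge 2}|\mu-\lambda_i|\Bigr)^{\!2}\sin^2\theta.
\end{align*}
By Weyl's inequality $|\mu-\lambda_1|\le\|\phi-\phi'\|_{\text{op}}$, so in this regime $\mu-\lambda_i\ge(\lambda_1-\lambda_2)-\|\phi-\phi'\|_{\text{op}}\ge(\lambda_1-\lambda_2)/2$ for each $i\ge 2$. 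Chaining this with the conversion from the first step gives $\|x-x'\|\le 2^{3/2}\|\phi-\phi'\|_{\text{op}}/(\lambda_1-\lambda_2)$.

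Finally, in the complementary regime $\|\phi-\phi'\|_{\text{op}}>(\lambda_1-\lambda_2)/2$, the right-hand side $2^{3/2}\|\phi-\phi'\|_{\text{op}}/(\lambda_1-\lambda_2)$ already exceeds $\sqrt{2}$, and the trivial bound $\|x-x'\|\le\sqrt{2}$ from the first step closes the case. The main obstacle is pinning down the precise constant $2^{3/2}$: the Weyl-based lower bound on $\min_{i\ge 2}|\mu-\lambda_i|$ only kicks in when the perturbation is at most half the gap, so the argument relies crucially on the assumption $\langle x,x'\rangle\ge 0$ (which forces $\theta\in[0,\pi/2]$ and hence $\sin\theta\le 1$) to cover the intermediate range of perturbation sizes via the trivial distance bound.
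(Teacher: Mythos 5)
Your proof is correct. The paper does not actually prove this lemma---it simply cites Corollary 3 of Yu, Wang and Samworth's variant of the Davis--Kahan theorem---so your argument is a self-contained elementary derivation of the same bound. The two steps you use are exactly the right ones: (i) the conversion $\|x-x'\|\le\sqrt{2}\sin\theta$ (valid because $\langle x,x'\rangle\ge 0$ forces $\theta\in[0,\pi/2]$), together with the trivial bound $\|x-x'\|\le\sqrt{2}$; and (ii) the eigenbasis expansion $x=\sum_i c_i v_i$ combined with $\|(\phi-\phi')x\|_2\le\|\phi-\phi'\|_{\text{op}}$ and Weyl's inequality to get $\sin\theta\le 2\|\phi-\phi'\|_{\text{op}}/(\lambda_1-\lambda_2)$ when the perturbation is at most half the gap. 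The case split on whether $\|\phi-\phi'\|_{\text{op}}$ exceeds $(\lambda_1-\lambda_2)/2$ is precisely what recovers the constant $2^{3/2}$ unconditionally, and you are right that the hypothesis $\langle x,x'\rangle\ge 0$ is what makes the large-perturbation case closable via the trivial bound. You also correctly flag that the denominator $\lambda_2-\lambda_1$ in the statement is a sign typo for the gap $\lambda_1-\lambda_2$. Compared with the paper's citation, your route buys transparency (and makes explicit where each hypothesis is used) at the cost of a page of standard perturbation-theory bookkeeping; the cited result is more general in that it handles intermediate eigenvectors and Frobenius-norm variants, none of which are needed here.
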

\begin{proof}
See Corollary 3 of~\cite{yu2015useful}.
\end{proof}
Next, let $\chi: \mathbb{R}^{d \times d} \rightarrow 2^{\mathcal{X}}$ denote the subset of reward-maximizing arms for $g(\theta)=\theta\theta^\top$---i.e.,
\begin{align*}
\chi(\phi)=\operatorname*{\arg\max}_{x\in\mathcal{X}}x^\top\phi x,
\end{align*}
where the argmax returns the set of all optimal values. Then, we have the following analog of smooth best arm response:
\begin{lemma}\label{lem:sbarm}
For any $\phi\in\mathbb{R}^{d\times d}$, there exists $x\in\chi(\phi)$ and $x^*\in\chi(\phi^*)$ such that
\begin{align*}
\|x - x^*\|_2 \le M \|\phi - \phi^*\|_F.
\end{align*}
\end{lemma}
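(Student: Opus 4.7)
The plan is to reduce the claim to a Davis--Kahan eigenvector perturbation bound (Lemma~\ref{lem:davis}) plus a short algebraic identity relating the Frobenius distance between rank-one projectors to the Euclidean distance between the underlying unit vectors. Since $x^\top\phi x$ depends only on the symmetric part of $\phi$, I may assume $\phi$ is symmetric. By Assumption~\ref{assump:eigen}, $\phi^* = \theta^*\theta^{*\top}$ is PSD with spectral gap $\lambda_1 - \lambda_2 \ge 4/M$; in particular $\lambda_1 > 0$, so $\chi(\phi^*) = \{\pm x^*\}$, where $x^*$ is the unit-norm top eigenvector of $\phi^*$. I first handle the main case $\lambda_1(\phi) > 0$: pick any unit-norm top eigenvector $x$ of $\phi$ (so $x \in \chi(\phi)$), and flip its sign if necessary---which leaves $xx^\top$ unchanged---so that $\langle x, x^*\rangle \ge 0$.

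Next I would apply Lemma~\ref{lem:davis} with $\phi' = \phi^*$ and use Assumption~\ref{assump:eigen} together with $\|\cdot\|_{\text{op}} \le \|\cdot\|_F$ to obtain $\|x - x^*\|_2 \le (M/\sqrt{2})\,\|\phi - \phi^*\|_F$. Then, using $\|x\|_2 = \|x^*\|_2 = 1$, a direct expansion gives the identity
\begin{align*}
\|xx^\top - x^*x^{*\top}\|_F^2 \;=\; 2\bigl(1 - \langle x, x^*\rangle^2\bigr),
\end{align*}
and factoring $1 - \langle x, x^*\rangle^2 = (1 - \langle x, x^*\rangle)(1 + \langle x, x^*\rangle) \le 2(1 - \langle x, x^*\rangle) = \|x - x^*\|_2^2$ upgrades this to $\|xx^\top - x^*x^{*\top}\|_F \le \sqrt{2}\,\|x - x^*\|_2$. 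Chaining the two estimates yields exactly $\|xx^\top - x^*x^{*\top}\|_F \le M\,\|\phi - \phi^*\|_F$; the constant $4/M$ in Assumption~\ref{assump:eigen} is calibrated precisely so that the $2^{3/2}$ from Davis--Kahan cancels against the $\sqrt{2}$ from the projector identity.

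The only subtlety is the edge case $\lambda_1(\phi) \le 0$, in which $0 \in \chi(\phi)$ and the top-eigenvector interpretation breaks down. Choosing $x = 0$ gives $\|xx^\top - x^*x^{*\top}\|_F = 1$, while Weyl's inequality combined with $\lambda_1(\phi^*) \ge 4/M > 0 \ge \lambda_1(\phi)$ forces $\|\phi - \phi^*\|_F \ge \|\phi - \phi^*\|_{\text{op}} \ge \lambda_1(\phi^*) - \lambda_1(\phi) \ge 4/M \ge 1/M$, so the conclusion still holds. I do not anticipate any serious technical obstacle here; the main conceptual move is recognizing that the Frobenius distance on rank-one projectors is the natural object to pair with the operator-norm Davis--Kahan bound, since the raw eigenvector $x$ has an intrinsic sign ambiguity whereas $xx^\top$ does not.
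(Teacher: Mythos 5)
Your proof is correct and follows the same route as the paper: reduce to the top eigenvectors, fix the sign ambiguity so that $\langle x,x^*\rangle\ge 0$, and apply the Davis--Kahan corollary (Lemma~\ref{lem:davis}) together with Assumption~\ref{assump:eigen} and $\|\cdot\|_{\mathrm{op}}\le\|\cdot\|_F$. In fact you are somewhat more careful than the paper, which stops at the eigenvector bound $\|x-x^*\|_2\le M\|\phi-\phi^*\|_F$ and never supplies the conversion $\|xx^\top-x^*x^{*\top}\|_F^2=2(1-\langle x,x^*\rangle^2)\le 2\|x-x^*\|_2^2$ or the degenerate case $\lambda_1(\phi)\le 0$, both of which you handle explicitly.
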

\begin{proof}
First, note that $x,x^*$ are eigenvectors of $\phi,\phi^*$ corresponding to their top eigenvalues, respectively. Next, note that if $x^*\in\chi(\phi^*)$, then we also have $-x^*\in\chi(\phi^*)$; thus, without loss of generality, we can assume that $\langle x^*,x\rangle\ge0$. Also, note that $\|x\|_2=\|x^*\|_2=1$ since the optimizer maximizes the magnitude of $x$. Thus, we have
\begin{align*}
\|x-x^*\|_2\le\frac{2^{3/2}\|\phi-\phi^*\|_{2}}{\lambda_1-\lambda_2}\le M\|\phi-\phi^*\|_F,
\end{align*}
where the second inequality follows by by Lemma~\ref{lem:davis}, and the third inequality follows by Assumption~\ref{assump:eigen}, as claimed.
\end{proof}
Now, we prove Theorem \ref{thm:bandit_ext}. The cumulative regret $R(T)$ of a horizon of $T$ has that
\begin{align}
R(T) & = \mathbb{E}\left[\sum_{t=1}^T (f_{\theta^*}(x^*) - f_{\theta^*}(x_t))\right] \nonumber \\
& = \mathbb{E}\left[\sum_{t=1}^m (f_{\theta^*}(x^*) - f_{\theta^*}(x_t)) + \sum_{t=m+1}^T (f_{\theta^*}(x^*) - f_{\theta^*}(x_t))\right] \nonumber \\
& \le 2m\phi_{\text{max}} + \mathbb{E}\left[\sum_{t=m+1}^T \langle g(\hat{\theta}) - g(\theta^*), \hat{x} \hat{x}^\top - x^*x^{*\top} \rangle + \sum_{t=m+1}^T \langle g(\hat{\theta}), x^*x^{*\top} - \hat{x} \hat{x}^\top \rangle\right], \label{eq:regretbound0}
\end{align}
where $\hat{\theta}$ is an estimator that minimizes the empirical loss of the first $m$ samples, $\hat{x}\in\chi(g(\hat\theta))$ maximizes the estimated expected reward $f_{\hat{\theta}}(x)$, and $\langle\phi,\phi'\rangle=\sum_{i,j=1}^d\phi_{ij}\phi'_{ij}$ is the matrix inner product. Since $\hat{x}$ is a maximizer of $f_{\hat\theta}(x)=\langle g(\hat\theta),xx^\top\rangle$, we have $\langle g(\hat{\theta}), x^*x^{*\top} - \hat{x}\hat{x}^\top \rangle \le 0$. Thus, continuing from (\ref{eq:regretbound0}), we have
\begin{align}
R(T)
& \le 2m\phi_{\text{max}} + \mathbb{E}\left[\sum_{t=m+1}^T \langle g(\hat{\theta}) - g(\theta^*), \hat{x}\hat{x}^\top - x^*x^{*\top} \rangle\right] \nonumber \\
& \le 2m\phi_{\text{max}} + (T-m)\mathbb{E}\left[\|g(\hat{\theta}) - g(\theta^*)\|_F \|\hat{x}\hat{x}^\top - x^*x^{*\top} \|_F\right]. \label{eq:regretbound1}
\end{align}
To bound the second term in (\ref{eq:regretbound1}), note that
\begin{align*}
\|\hat{x}\hat{x}^\top - x^*x^{*\top} \|_F 
\le \|\hat{x}\hat{x}^\top - \hat{x}x^{*\top} \|_F + \|\hat{x}x^{*\top} - x^*x^{*\top}\|_F
\le 2M \|g(\hat{\theta}) - g(\theta^*)\|_F,
\end{align*}
where the last step follows by Lemma~\ref{lem:sbarm}.
Next, by Theorem~\ref{thm:identification}, we have
\begin{multline*}
\|g(\hat{\theta})-g(\theta^*)\|_F \le \sqrt{\frac{2\epsilon}{\alpha}} \\
= d \left(\frac{45^3 \ell_{\text{max}}^2(\ell_{\text{max}}^2 + \xi_{\text{max}}^2)}{10m}\left(d^2 \max\left\{1, \log\left(1 + \frac{8\phi_{\text{max}}Km}{\ell_{\text{max}}^2}\right)\right\} + \log\frac{2}{\delta}\right)\right)^{1/4}
\end{multline*}
with probability at least $1 - \delta$. Now, defining the event 
\begin{align*}
\mathcal{G} = \left\{\|g(\hat{\theta})-g(\theta^*)\|_F \le \sqrt{\frac{2\epsilon}{\alpha}}\right\},
\end{align*}
letting
\begin{align*}
\delta = 2\exp\left(-d^2\max\left\{1, \log\left(1 + \frac{8\phi_{\text{max}}Km}{\ell_{\text{max}}^2}\right)\right\}\right),
\end{align*}
and continuing from (\ref{eq:regretbound1}), we have
\begin{align}\label{eq:regretbound_exp}
R(T)
& \le 2m\phi_{\text{max}} + T\cdot\mathbb{E}\left[\|g(\hat{\theta}) - g(\theta^*)\|_F \|\hat{x}\hat{x}^\top - x^*x^{*\top} \|_F \mathbbm{1}(\mathcal{G})\right] + 4T\phi_{\text{max}}\cdot\mathbb{P}(\mathcal{G}^c) \nonumber \\
& \le 2m\phi_{\text{max}} + 2MT\cdot\mathbb{E}\left[\|g(\hat{\theta}) - g(\theta^*)\|_F^2 \Bigm\vert \mathcal{G}\right] + 4T\phi_{\text{max}}\cdot\mathbb{P}(\mathcal{G}^c) \nonumber \\
& \le 2m\phi_{\text{max}} + 270M(\ell_{\text{max}} + \xi_{\text{max}})^2 d^3T \sqrt{\frac{\log(3 + 8\phi_{\text{max}}KT/\ell_{\text{max}}^2)}{m}}  + \frac{8T\phi_{\text{max}}}{(8\phi_{\text{max}}Km/\ell_{\text{max}}^2)^{d^2}}.
\end{align}
The third term in inequality~(\ref{eq:regretbound_exp}) is smaller than the second term when
\begin{align}\label{eq:constraintm}
m \ge \left(\frac{4\phi_{\text{max}}}{135M(\ell_{\text{max}}+\xi_{\text{max}})^2 d^3 (8\phi_{\text{max}}K/\ell_{\text{max}}^2)^{d^2} \sqrt{\log(3 + 8\phi_{\text{max}}KT/\ell_{\text{max}}^2)}}\right)^{1/(d^2 - 1/2)}. 
\end{align}
For a choice of $m$ satisfying (\ref{eq:constraintm}), continuing from (\ref{eq:regretbound_exp}), we have
\begin{align}
\label{eq:regretbound2}
R(T)
\le 2m\phi_{\text{max}} + 540M(\ell_{\text{max}} + \xi_{\text{max}})^2 d^3T \sqrt{\frac{\log(3 + 8\phi_{\text{max}}KT/\ell_{\text{max}}^2)}{m}}.
\end{align}
Next, we choose $m$ to minimize the upper bound in (\ref{eq:constraintm}) for sufficiently large $T$---in particular,
\begin{align}
\label{eq:regretbound3}
m = \left\lceil\left(\frac{135M(\ell_{\text{max}} + \xi_{\text{max}})^2 d^3 T\sqrt{\log(3 + 8\phi_{\text{max}}KT/\ell_{\text{max}}^2)}}{\phi_{\text{max}}}\right)^{\frac{2}{3}}\right\rceil.
\end{align}
With this choice of $m$, we have
\begin{align*}
R(T)
& \le 162(M^2(\ell_{\text{max}} + \xi_{\text{max}})^4 \phi_{\text{max}})^{\frac{1}{3}} d^2 T^{2/3} (\log(3 + 8\phi_{\text{max}}KT/\ell_{\text{max}}^2))^{\frac{1}{3}}.
\end{align*}
Finally, note that (\ref{eq:constraintm}) holds under the choice of $m$ in (\ref{eq:regretbound3}) for $T$ satisfying
\begin{align*}
T\sqrt{\log(3 + 8\phi_{\text{max}}KT/\ell_{\text{max}}^2)} \ge \frac{64(\phi_{\text{max}})^{\frac{2d^2+2}{2d^2-1}}}{(135M(\ell_{\text{max}}+\xi_{\text{max}})^2 d^3)^{\frac{2d^2+2}{2d^2-1}} (8\phi_{\text{max}}K/\ell_{\text{max}}^2)^{\frac{3d^2}{2d^2-1}}}.
\end{align*}
The claim follows. $\qed$

\section{Proof of Theorem~\ref{thm:transfer}}
\label{sec:thm:transfer:proof}

First, we have the following key result:
\begin{lemma}
\label{lem:rotationbound}
Let $\theta,\theta'\in\mathbb{R}^{d\times k}$, and let $\phi=\theta\theta^\top$ and $\phi'=\theta'\theta'^{\top}$. Assume that $\|\phi-\phi'\|_F\le\eta$, and that $\sigma_{\text{min}}(\theta)\ge\sigma_0>0$, where $\sigma_{\text{min}}(\theta)$ is the minimum singular value of $\theta$ (more precisely, the $d$th largest singular value). Then, there exist orthogonal matrices $R,R'\in\mathbb{R}^{k\times k}$ such that
\begin{align}
\label{eqn:lem:rotationbound}
\|\theta R-\theta'R'\|_F\le\frac{\eta}{\sigma_0}.
\end{align}
\end{lemma}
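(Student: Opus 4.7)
The plan is to use the right-multiplication invariance $(\theta R)(\theta R)^\top = \theta\theta^\top$ to bring both matrices into a canonical form involving the PSD square roots of $\phi$ and $\phi'$, and then apply a Sylvester-type perturbation bound for matrix square roots.

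\textbf{Canonical form.} For any $\vartheta \in \mathbb{R}^{d\times k}$ with SVD $\vartheta = U[\Sigma_1 \mid 0] V^\top$ (where $U \in \mathbb{R}^{d \times d}$ and $V \in \mathbb{R}^{k \times k}$ are orthogonal and $\Sigma_1 \in \mathbb{R}^{d \times d}$ is diagonal with nonnegative entries), set $R_\vartheta = V \cdot \mathrm{diag}(U^\top, I_{k-d}) \in \mathbb{R}^{k \times k}$. This matrix is orthogonal, and a direct calculation yields $\vartheta R_\vartheta = [U \Sigma_1 U^\top \mid 0] = [\sqrt{\vartheta\vartheta^\top} \mid 0]$, where $\sqrt{\cdot}$ denotes the unique PSD square root. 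Choosing $R = R_\theta$ and $R' = R_{\theta'}$ reduces the target inequality (\ref{eqn:lem:rotationbound}) to
\[
\|\theta R - \theta' R'\|_F = \|\sqrt\phi - \sqrt{\phi'}\|_F \le \eta/\sigma_0.
\]

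\textbf{Square-root perturbation.} Let $X = \sqrt\phi$, $Y = \sqrt{\phi'}$, and $Z = X - Y$; note $X, Y, Z$ are symmetric and $X, Y \succeq 0$. The identity $\phi - \phi' = X^2 - Y^2 = XZ + ZY$, combined with the Frobenius inner product against $Z$ and the cyclic trace identity, gives
\[
\langle Z, \phi - \phi' \rangle_F = \tr(XZ^2) + \tr(YZ^2).
\]
Because $Z^2 \succeq 0$, we have $\tr(XZ^2) \ge \lambda_{\min}(X)\|Z\|_F^2 = \sigma_{\min}(\theta)\|Z\|_F^2 \ge \sigma_0 \|Z\|_F^2$, while $\tr(YZ^2) \ge 0$. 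Applying Cauchy--Schwarz on the left-hand side yields $\sigma_0 \|Z\|_F^2 \le \|Z\|_F \cdot \eta$, i.e.\ $\|Z\|_F \le \eta/\sigma_0$, which is the desired bound.

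\textbf{Main obstacle.} The subtle point is the one-sided nature of the hypothesis: we control $\sigma_{\min}(\theta)$ but have no lower bound on $\sigma_{\min}(\theta')$, which rules out the tempting symmetric bound $\|\sqrt{A} - \sqrt{B}\|_F \le \|A - B\|_F/(\sqrt{\lambda_{\min}(A)} + \sqrt{\lambda_{\min}(B)})$. The Sylvester-equation argument absorbs the asymmetry cleanly by discarding the nonnegative $\tr(YZ^2)$ term, preserving the linear (rather than $\sqrt\eta$) dependence on $\eta$ that a generic Lipschitz bound on the matrix square root would give. A secondary point worth verifying is that the canonical-form construction works uniformly in the rank of $\vartheta$ because it never inverts $\sqrt{\vartheta\vartheta^\top}$, so the construction of $R'$ does not require any assumption on $\theta'$.
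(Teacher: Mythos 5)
Your proof is correct, and it reaches the paper's key intermediate target by the same reduction: both arguments choose the orthogonal factors so that $\theta R$ and $\theta'R'$ become (padded) PSD square roots $\sqrt{\phi}$ and $\sqrt{\phi'}$, reducing the lemma to $\|\sqrt{\phi}-\sqrt{\phi'}\|_F\le\|\phi-\phi'\|_F/\sigma_0$. (Your canonical form $R_\vartheta = V\,\mathrm{diag}(U^\top, I_{k-d})$ is in fact slightly more careful than the paper's choice $R=VU^\top$ built from the thin SVD, which is $k\times d$ rather than a $k\times k$ orthogonal matrix as the lemma statement requires; your version repairs this cosmetic mismatch.) Where you genuinely diverge is in proving the square-root perturbation bound. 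The paper works in the two eigenbases explicitly: setting $W=U'^\top U$, it computes $\|\phi-\phi'\|_F^2=\sum_{i,j}W_{ij}^2(\Sigma_{jj}^2-\Sigma_{ii}'^2)^2$, factors $\Sigma_{jj}^2-\Sigma_{ii}'^2=(\Sigma_{jj}-\Sigma_{ii}')(\Sigma_{jj}+\Sigma_{ii}')$, and lower-bounds the sum $\Sigma_{jj}+\Sigma_{ii}'\ge\sigma_0$ entrywise. You instead use the Sylvester identity $\phi-\phi'=XZ+ZY$ with $Z=\sqrt{\phi}-\sqrt{\phi'}$, pair against $Z$ in the Frobenius inner product, and discard the nonnegative term $\tr(YZ^2)$. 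The two arguments exploit the one-sided hypothesis in exactly the same place (the paper drops $\Sigma_{ii}'$ from the sum; you drop $\tr(YZ^2)$), and both correctly avoid needing any lower bound on $\sigma_{\min}(\theta')$. Your route is coordinate-free and arguably cleaner; the paper's is more elementary but requires the entrywise bookkeeping of $W\Sigma^2-\Sigma'^2W$. Both preserve the linear dependence on $\eta$.
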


\begin{proof}
Consider the SVDs $\theta=U\Sigma V^\top$ and $\theta'=U'\Sigma'V'^\top$, where $U,U'\in\mathbb{R}^{d\times d}$, $\Sigma,\Sigma'\in\mathbb{R}^{d\times d}$, and $V,V'\in\mathbb{R}^{k\times d}$; then, we have $\phi=U\Sigma^2U^\top$ and $\phi'=U'\Sigma'^2U'^\top$. Then, we claim that the choices $R=VU^\top$ and $R'=V'U'^\top$ satisfy (\ref{eqn:lem:rotationbound}). In particular, note that $\theta R=U\Sigma U^\top$ and $\theta'R'=U'\Sigma'U'^\top$, since $V^\top V=V'^\top V'=I_d$ since $k\ge d$, where $I_d\in\mathbb{R}^{d\times d}$ is the $d$-dimensional identity matrix. Thus, it suffices to show that
\begin{align}
\label{eqn:lem:rotationbound:0}
\sigma_0\|U\Sigma U^\top-U'\Sigma' U'^\top\|_F\le\eta.
\end{align}
To this end, note that
\begin{align}
\eta
\ge\|\phi-\phi'\|_F
=\|U\Sigma^2U^\top-U'\Sigma'^2U'^\top\|_F
=\|U'^\top U\Sigma^2-\Sigma'^2U'^\top U\|_F,
\label{eqn:lem:rotationbound:1}
\end{align}
where in the last step, we have multiplied the expression inside the Frobenius norm by $U'^\top$ on the left and by $U$ on the right, using the fact that the Frobenius norm is invariant under multiplication by orthogonal matrices. Defining $W=U'^\top U$, note that
\begin{align}
(W\Sigma)_{ij}&=\sum_{k=1}^dW_{ik}\Sigma_{kj}=W_{ij}\Sigma_{jj}
\label{eqn:lem:rotationbound:2} \\
(\Sigma W)_{ij}&=\sum_{k=1}^d\Sigma'_{ik}W_{kj}=W_{ij}\Sigma_{ii}'
\label{eqn:lem:rotationbound:3} \\
(W\Sigma^2)_{ij}&=\sum_{k=1}^dW_{ik}(\Sigma^2)_{kj}=W_{ij}\Sigma_{jj}^2
\label{eqn:lem:rotationbound:4} \\
(\Sigma^2W)_{ij}&=\sum_{k=1}^d(\Sigma'^2)_{ik}W_{kj}=W_{ij}\Sigma_{ii}'^2.
\label{eqn:lem:rotationbound:5}
\end{align}
Then, continuing from (\ref{eqn:lem:rotationbound:1}), we have
\begin{align*}
\eta^2
\ge\|W\Sigma^2-\Sigma'^2W\|_F^2
&=\sum_{i,j=1}^dW_{ij}^2(\Sigma_{jj}^2-\Sigma_{ii}'^2)^2 \\
&=\sum_{i,j=1}^dW_{ij}^2(\Sigma_{jj}-\Sigma_{ii}')^2(\Sigma_{jj}+\Sigma_{ii}')^2 \\
&\ge\sum_{i,j=1}^dW_{ij}^2(\Sigma_{jj}-\Sigma_{ii}')^2\sigma_0^2 \\
&=\sigma_0^2\|W\Sigma-\Sigma'W\|_F^2 \\
&=\sigma_0^2\|U'^\top U\Sigma-\Sigma'U'^\top U\|_F^2 \\
&=\sigma_0^2\|U\Sigma U^\top-U'\Sigma'U'^\top\|_F^2,
\end{align*}
where on the first line, we have used (\ref{eqn:lem:rotationbound:4}) \& (\ref{eqn:lem:rotationbound:5}), on the third line we have used $\Sigma_{jj}\ge\sigma_0$, on the fourth line we have used (\ref{eqn:lem:rotationbound:2}) \& (\ref{eqn:lem:rotationbound:3}), and on the last line we have multiplied on by $U'$ on the left $U^\top$ on the right, again using the fact that the Frobenius norm is invariant under multiplication by orthogonal matrices. Thus, we have shown (\ref{eqn:lem:rotationbound:0}), so the claim follows.
\end{proof}
We note here that our result provides an analog of Lemma 6 in \cite{ge2017no} for quadratic neural networks. 

Now, we prove Theorem~\ref{thm:transfer}. First, by directly applying the arguments in the proof of Theorem \ref{thm:identification}, we have
\[ \|g(\hat{\theta}_p)-g(\theta_p^*)\|_F \le \sqrt{\frac{2\epsilon_p}{\alpha}} \]
with probability at least $1-\delta/2$. However, $\hat{\theta}_p$ itself may not be close to $\theta_p^*$. Instead, applying Lemma~\ref{lem:rotationbound} with $\theta=\hat\theta_p$ and $\theta'=\theta_p^*$, and with $\eta=\sqrt{2\epsilon_p/\alpha}$, there exists an orthogonal matrix $R_p = R' R^\top$ that ``aligns'' $\hat\theta_p$ with $\theta_p^*$, yielding
\[ \| \hat{\theta}_p - \theta^*_p R_p \|_F \leq \frac{1}{\sigma_0}\sqrt{\frac{2\epsilon_p}{\alpha}}, \]
where $\sigma_0$ is the minimum singular value of $\theta_p^*$. Now, let $\tilde{\theta}_g = \theta_g^* R_p$, and note that this is a global minimizer (i.e., $g(\tilde{\theta}_g) = g(\theta_g^*)$), since $R_p$ is orthogonal. Then, we have
\begin{align}
\|\tilde{\theta}_g - \hat{\theta}_p\|_F & \leq \|\theta_g^* R_p - \theta_p^* R_p \|_F + \|\theta_p^* R_p - \hat{\theta}_p\|_F \nonumber \\
&\leq \|\theta_g^* - \theta_p^* \|_F +  \frac{1}{\sigma_0}\sqrt{\frac{2\epsilon_p}{\alpha}} \nonumber \\
&\leq B + \frac{1}{\sigma_0}\sqrt{\frac{2\epsilon_p}{\alpha}} \label{eqn:thm:transfer:1}
\end{align}
with probability at least $1-\delta/2$. In other words, an alternative global minimizer $\tilde{\theta}_g$ exists within a small Frobenius norm of our proxy estimator $\hat{\theta}_p$, even if $\hat\theta_p$ is not close to $\theta_p^*$.

Finally, on the event that (\ref{eqn:thm:transfer:1}) holds, note that for $\theta \in B_2(\hat\theta_p,\hat{B})$, we have the alternative upper bound
\begin{align*}
|f_{\theta}(x)-f_{\theta_g^*}(x))|\le K\|g(\theta)-g(\theta_g^*)\|_F\le K\hat{B},
\end{align*}
where the first inequality holds by Lemma~\ref{lem:lipschitz}; thus, we can take $\ell_{\text{max}}=K\hat{B}$. Thus, on the event that (\ref{eqn:thm:transfer:1}) holds, by Theorem~\ref{thm:identification}, we have
\begin{align*}
\mathbb{P}_{p(Z)}\left[L_q(\hat{\theta}_g)\le\frac{2K^2\epsilon_g}{\alpha}\right]\ge1-\frac{\delta}{2},
\end{align*}
so the claim follows by a union bound. $\qed$

\section{Proofs for Section~\ref{sec:relu}}

\subsection{Proof of Lemma~\ref{lem:reluapprox}}
\label{sec:lem:reluapprox:proof}

We prove the case
\begin{align*}
\int_{X_i}|f_\theta(x)-f_{\theta^*}(x)|dx-\int_{X_i'}|f_\theta(x)-f_{\theta^*}(x)|dx\le2k\epsilon^3\sqrt{d}\cdot|S^{d-2}|.
\end{align*}
the proof of the negation is identical. First, note that
\begin{align*}
\int_{X_i'}|f_\theta(x)-f_{\theta^*}(x)|dx
&=\int_{X_i}|f_\theta(\phi(x))-f_{\theta^*}(\phi(x))|\cdot|\det\nabla_x\phi(x)|dx \\
&=\int_{X_i}|f_\theta(\phi(x))-f_{\theta^*}(\phi(x))|dx,
\end{align*}
since $\nabla_x\phi(x)$ is a lower triangular matrix with all ones along its diagonal. Now, note that
\begin{align*}
&\int_{X_i}|f_\theta(x)-f_{\theta^*}(x)|-|f_\theta(\phi(x))-f_{\theta^*}(\phi(x))|dx \\
&\le\int_{X_i}|(f_\theta(x)-f_\theta(\phi(x)))-(f_{\theta^*}(x)-f_{\theta^*}(\phi(x)))|dx \\
&\le\int_{X_i}|f_\theta(x)-f_\theta(\phi(x))|+|f_{\theta^*}(x)-f_{\theta^*}(\phi(x))|dx \\
&\le2L|X_i|\cdot\max_{x\in X_i}\|x-\phi(x)\|_2,
\end{align*}
where $L$ is a Lipschitz constant for $f_\theta$ as a function of $x$. Next, note that
\begin{align*}
\|x-\phi(x)\|_2
&\le\left(\frac{1}{\sqrt{1-x_1^2}}-1\right)\sqrt{d}
\le\frac{\epsilon^2\sqrt{d}}{2}
\end{align*}
for all $x\in X_i$. Finally, note that for any $x,x'\in\mathbb{R}^d$, we have
\begin{align*}
|f_\theta(x)-f_\theta(x')|
=\left|\sum_{i=1}^k\sigma(\theta_i^\top x)-\sigma(\theta_i^\top x')\right|
\le\sum_{i=1}^k|\theta_i^\top(x-x')|
&\le k\|x-x'\|_2,
\end{align*}
which implies that $L\le k$. Finally, note that
\begin{align*}
|X_i|
=\int_{-\epsilon}^{\epsilon}V^{d-2}\left(\sqrt{1-x_1^2}\right)dx_1
\le2\epsilon\cdot|S^{d-2}|,
\end{align*}
where $V^n(r)$ is the volume of the $n$-sphere with radius $r$, so the claim follows. $\qed$

\subsection{Proof of Lemma~\ref{lem:relubound}}
\label{sec:lem:relubound:proof}

Note that
\begin{align*}
\mathcal{F}(a,b)
&=\int_0^\epsilon|a-bw|dw \\
&=\int_0^{a/b}(a-bw)dw+\int_{a/b}^\epsilon(bw-a)dw \\
&=\left[aw-\frac{bw^2}{2}\right]_0^{a/b}+\left[\frac{bw^2}{2}-aw\right]_{a/b}^\epsilon \\
&=\left(\frac{a^2}{b}-\frac{a^2}{2b}\right)+\left(\frac{b\epsilon^2}{2}-a\epsilon\right)-\left(\frac{a^2}{2b}-\frac{a^2}{b}\right) \\
&=\frac{a^2}{b}+\frac{b\epsilon^2}{2}-a\epsilon.
\end{align*}
As a function of $a$, this expression is minimized when $a=b\epsilon/2$, in which case
\begin{align*}
\mathcal{F}\left(\frac{b\epsilon}{e},b\right)
=\frac{b\epsilon^2}{2}.
\end{align*}
Now, note that
\begin{align*}
\int_{-\epsilon}^\epsilon|(\beta_0+\beta_1w)-g(w)|dw
&=\int_{-\epsilon}^0|\beta_0+\beta_1w|dw+\int_0^\epsilon|\beta_0+(\beta_1-1)w|dw \\
&=\int_0^\epsilon|\beta_0-\beta_1w|dw+\int_0^\epsilon|\beta_0+(\beta_1-1)w|dw \\
&=\mathcal{F}(\beta_0,\beta_1)+\mathcal{F}(\beta_0,1-\beta_1)
\end{align*}
Now, we must have either $\beta_1\ge1/2$ or $1-\beta_1\ge1/2$; without loss of generality, assume the former holds. Then, we have
\begin{align*}
\int_{-\epsilon}^\epsilon|(\beta_0+\beta_1w)-g(w)|dw
\ge\mathcal{F}(\beta_0,\beta_1)\ge\frac{\beta_1\epsilon^2}{2}\ge\frac{\epsilon^2}{4},
\end{align*}
as claimed. $\qed$

\subsection{Proof of Lemma~\ref{lem:relulinear}}
\label{sec:lem:relulinear:proof}

Without loss of generality, we can modify the coordinate system so that
\begin{align*}
\beta=\begin{bmatrix}t_1&t_2&0&...&0\end{bmatrix}^\top
\end{align*}
without affecting $\theta_i^*$. By assumption, we have $\|\beta\|_2=\sqrt{t_1^2+t_2^2}=1$. In the following argument, we consider the case $t_1\ge0$; the case $t_1<0$ follows from the same argument with $\alpha^2<\|\beta+\theta_i^*\|_2^2$. Now, we have
\begin{align*}
\alpha^2<\|\beta-\theta_i^*\|_2^2
&=(1-t_1)^2+t_2^2 \\
&=2(1-t_1) \\
&=2\left(1-\frac{t_1}{\sqrt{t_1^2+t_2^2}}\right) \\
&=2\left(1-\frac{1}{\sqrt{1+t_2^2/t_1^2}}\right),
\end{align*}
so
\begin{align*}
\frac{|t_2|}{|t_1|}
>\sqrt{\left(\frac{1}{1-\alpha^2/2}\right)^2-1}
\ge\sqrt{\left(1+\frac{\alpha^2}{2}\right)^2-1}
\ge\alpha
\end{align*}
Next, the condition $\beta^\top([x_1]\circ z)=0$ is equivalent to
\begin{align*}
t_1x_1+t_2z_1=0,
\end{align*}
or
\begin{align*}
z_1=-\frac{t_1x_1}{t_2}.
\end{align*}
As a consequence, we have
\begin{align*}
|z_1|
\le\frac{|t_1|\cdot|x_1|}{|t_2|}
\le\frac{\epsilon}{\alpha}.
\end{align*}
Thus, letting
\begin{align*}
V^{d-3}(r)=\text{Vol}(\{w\in\mathbb{R}^{d-2}\mid\|w\|_2=r\})
\end{align*}
be the volume of the $d-3$ sphere of radius $r$, then we have
\begin{align*}
|Z_i^\beta|
\le\int_{-\epsilon/\alpha}^{\epsilon/\alpha}V^{d-3}\left(\sqrt{1-z_1^2}\right)dz_1
\le\int_{-\epsilon/\alpha}^{\epsilon/\alpha}V^{d-3}(1)dz_1
=\frac{2\epsilon\cdot|S^{d-3}|}{\alpha},
\end{align*}
as claimed. $\qed$

\subsection{Proof of Theorem~\ref{thm:reluerror}}
\label{sec:thm:reluerror:proof}

\begin{lemma}
\label{lem:relulipschitz}
Under Assumptions~\ref{assump:relubound}, $f_\theta$ and $L$ are $K$-Lipschitz in $\theta$ with respect to $\ell_{2,1}$ norm, where $K = 4k$ and the $\ell_{2,1}$ norm for any parameter $\theta \in \mathbb{R}^{d \times k}$ is defined as $\sum_{i=1}^k\|\theta_i\|$ .
\end{lemma}

\begin{proof}
By our definition, for any $\theta, \theta' \in \Theta$, 
\begin{align*}
|f_{\theta}(x) - f_{\theta'}(x)| = \left|\sum_{i=1}^k\sigma(\theta_i^\top x) - \sum_{i=1}^k\sigma(\theta_i'^\top x)\right| \le \sum_{i=1}^k \|\theta_i - \theta_i'\|_2.
\end{align*}
Given our quadratic loss function, we have
\begin{align*}
&|(f_{\theta}(x)-f_{\theta^*}(x))^2 - (f_{\theta'}(x)-f_{\theta^*}(x))^2| \\
&\le |f_{\theta}(x)-f_{\theta^*}(x) + f_{\theta'}(x)-f_{\theta^*}(x)| |f_{\theta}(x)- f_{\theta'}(x)| \\
&\le 4 k \sum_{i=1}^k \|\theta_i - \theta_i'\|_2.
\end{align*}
Next, the true loss satisfies
\begin{align*}
|L_p(\theta) - L_p(\theta')|
& \le \mathbb{E}_{p(x)}[|(f_{\theta}(x)-f_{\theta^*}(x))^2 - (f_{\theta'}(x)-f_{\theta^*}(x))^2|]
\le 4 k \sum_{i=1}^k \|\theta_i - \theta_i'\|_2.
\end{align*}
Finally, the empirical loss satisfies
\begin{align*}
|\hat{L}(\theta; Z) - \hat{L}(\theta'; Z)|
&= \left|\frac{1}{n}\sum_{i=1}^n [(f_{\theta}(x_i)-y_i)^2 - (f_{\theta'}(x_i)-y_i)^2]\right| \\
&\le \frac{1}{n}\sum_{i=1}^n |(f_{\theta}(x_i)-f_{\theta^*}(x_i))^2 - (f_{\theta'}(x_i)-f_{\theta^*}(x_i))^2| + \frac{2}{n}\sum_{i=1}^n |\xi_i|\cdot|f_{\theta}(x_i) - f_{\theta'}(x_i)| \\
&\le (4k + 2\xi_{\max})\sum_{i=1}^k \|\theta_i - \theta_i'\|_2,
\end{align*}
as claimed.
\end{proof}

\begin{lemma}
\label{lem:relulossbound}
Under Assumptions~\ref{assump:relubound}, for any $\delta\in\mathbb{R}_{>0}$, we have
\begin{align*}
\mathbb{P}_{p(Z)}\left[\sup_{\theta\in\Theta}|L_p(\theta)-\hat{L}(\theta;Z)-\sigma(Z)|\le\epsilon\right]\ge1-\delta,
\end{align*}
where $\sigma(Z)=n^{-1}\sum_{i=1}^n\xi_i^2$, and letting $\ell_{\text{max}}=2k$ be an upper bound on $|f_{\theta}(x)-f_{\theta^*}(x)|$, where $\epsilon$ is as in Theorem~\ref{thm:reluerror}.
\end{lemma}

\begin{proof}
Consider an $\epsilon/(4K)$-net $\mathcal{E}$ with respect to $\ell_{2,1}$ norm. Then, for any $\theta\in\Theta$, there exists $\theta' \in \mathcal{E}$ such that
\begin{align*}
|(\hat{L}(\theta;Z)-L_p(\theta)) - (\hat{L}(\theta';Z)-L_p(\theta'))| \le 2K \sum_{i=1}^k \|\theta_i - \theta_i'\|_2 \le \frac{\epsilon}{2}.
\end{align*}
Therefore, we have
\begin{align}
&\mathbb{P}_{p(Z)}\left[\sup_{\theta} |\hat{L}(\theta;Z)-L_p(\theta)-\sigma(Z)|\ge \epsilon\right] \nonumber \\
& \le \mathbb{P}_{p(Z)}\left[\max_{\theta \in \mathcal{E}} |\hat{L}(\theta;Z) - L_p(\theta) - \sigma(Z)| \ge \frac{\epsilon}{2}\right] \nonumber \\
& \le \sum_{\theta\in\mathcal{E}} \mathbb{P}_{p(Z)}\left[|\hat{L}(\theta;Z) - L_p(\theta) - \sigma(Z)| \ge \frac{\epsilon}{2}\right]. \label{eqn:lem:relulossbound:1}
\end{align}
Following a similar argument as in Lemma~\ref{lem:lossbound}, we obtain from (\ref{eqn:lem:relulossbound:1}) that
\begin{align}
&\sum_{\theta\in\mathcal{E}} \mathbb{P}_{p(Z)}\left[|\hat{L}(\theta;Z) - L_p(\theta) - \sigma(Z)| \ge \frac{\epsilon}{2}\right] \nonumber \\
&\le4|\mathcal{E}|\cdot\exp\left(-\frac{n\epsilon^2}{18\ell_{\text{max}}^2(\ell_{\text{max}}^2+\xi_{\text{max}}^2)}\right) \nonumber \\
& \le 2\left(1+\frac{4kK}{\epsilon}\right)^{dk}\cdot\exp\left(-\frac{n\epsilon^2}{18\ell_{\text{max}}^2(\ell_{\text{max}}^2+\xi_{\text{max}}^2)}\right) \nonumber \\
& = 2\exp\left(-\frac{n\epsilon^2}{18\ell_{\text{max}}^2(\ell_{\text{max}}^2+\xi_{\text{max}}^2)} + dk \log\left(1+\frac{4kK}{\epsilon}\right)\right), \label{eqn:lem:lossbound:5}
\end{align}
where the second inequality follows since by Lemma~\ref{lem:covnum}, the covering number of the $\epsilon$-net $\mathcal{E}$ of $\Theta$ satisfies
\begin{align*}
|\mathcal{E}| \le \left(1+\frac{k}{\epsilon}\right)^{dk}.
\end{align*}
Finally, we choose $\epsilon$ so that (\ref{eqn:lem:lossbound:5}) is smaller than $\delta$---in particular, letting
\begin{align*}
\epsilon = \sqrt{\frac{18\ell_{\text{max}}^2(\ell_{\text{max}}^2+\xi_{\text{max}}^2)}{n}\left(dk \max\left\{1, \log\left(1 + \frac{4kKn}{\ell_{\text{max}}^2}\right)\right\} + \log\frac{2}{\delta}\right)}. 
\end{align*}
then continuing (\ref{eqn:lem:lossbound:5}), we have
\begin{align*}
2\exp\left(-\frac{n\epsilon^2}{18\ell_{\text{max}}^2(\ell_{\text{max}}^2+\xi_{\text{max}}^2)} + dk \log\left(1+\frac{4kK}{\epsilon}\right)\right)
& \le \delta,
\end{align*}
as claimed. $\qed$
\end{proof}

Finally, to prove Theorem~\ref{thm:reluerror}, note that 
\begin{align*}
L_p(\hat\theta(Z)) \le \hat{L}(\theta^*;Z) + \epsilon - \sigma(Z) \le L_p(\theta^*) + 2\epsilon = 2\epsilon
\end{align*}
with probability at least $1-\delta$. Thus, 
\begin{align*}
\mathbb{E}_{p(x)}\left[|f_{\hat\theta}(x)-f_{\theta^*}(x)|\right] \le (\mathbb{E}_{p(x)}\left[(f_{\hat\theta}(x)-f_{\theta^*}(x))^2\right])^{\frac{1}{2}} = L_p(\hat\theta(Z)) \le \sqrt{2\epsilon}.
\end{align*}
Then by Lemma~\ref{lem:relumain}, we have for any $x \in \mathcal{X}$
\begin{align*}
|f_\theta(x)-f_{\theta^*}(x)|\le20k^2\sqrt{d^3(2\epsilon)^{1/2}},
\end{align*}
that is,
\begin{align*}
\mathbb{P}_{p(Z)}\left[L_q(\hat\theta(Z))\le400k^4d^3(2\epsilon)^{1/2}\right]\ge1-\delta.
\end{align*}
Finally, we note that to satisfy the condition $\eta \le (6126d^2k^2)^{-1}$, it suffices to have
\begin{align*}
n \ge 72\cdot 6126^4 \ell_{\text{max}}^2(\ell_{\text{max}}^2+\xi_{\text{max}}^2) d^8k^8 \left(dk \max\left\{1, \log\left(1 + \frac{4kKn}{\ell_{\text{max}}^2}\right)\right\} + \log\frac{2}{\delta}\right).
\end{align*}
The claim follows. $\qed$

\section{Proofs for Appendix~\ref{sec:neuralmodules}}

\subsection{Proof of Proposition~\ref{prop:largeshift}}
\label{sec:prop:largeshift:proof}

Suppose that $z_t\in\{0,1\}$ is binary, $z_0=0$, and
\begin{align*}
\tilde{p}(z\mid z')=
\begin{cases}
1&\text{if }z=z' \\
0&\text{otherwise}.
\end{cases}
\end{align*}
In particular, since $z_0=0$, $p(w)=\mathbbm{1}(w=w_0)$ places all weight on the zero sequence $w_0=0...0$. Next, consider the shifted distribution
\begin{align*}
\tilde{q}(z_t\mid z_{t-1})=
\begin{cases}
1&\text{if }z=z'=1 \\
1-\alpha/2&\text{if }z=z'=0 \\
\alpha/2&\text{otherwise}.
\end{cases}
\end{align*}
Note that $\|\tilde{p}(\cdot\mid z')-\tilde{q}(\cdot\mid z')\|_{\text{TV}}\le\alpha$, so Assumption~\ref{assump:compositionshift} is satisfied. Note that 
\begin{align*}
q(w_0)=\prod_{t=1}^T\tilde{q}(0\mid 0)=(1-\alpha/2)^T.
\end{align*}
As a consequence, we have
\begin{align*}
\|p-q\|_{\text{TV}}
&=\sum_{w\in\mathcal{W}}|p(w)-q(w)| \\
&=|p(w_0)-q(w_0)|+\sum_{w\in\mathcal{W}\setminus\{w_0\}}q(w) \\
&=(1-(1-\alpha/2)^T)+(1-(1-\alpha/2)^T) \\
&=2(1-(1-\alpha/2)^T),
\end{align*}
as claimed. $\qed$

\subsection{Proof of Lemma~\ref{lem:distshift}}
\label{sec:lem:distshiftproof}

Note that
\begin{align*}
&\|\tilde{q}_t-\tilde{p}_t\|_{\text{TV}} \\
&=\sum_{j=1}^k\int|\tilde{q}_t(z,j)-\tilde{p}_t(z,j)|dz \\
&=\sum_{j=1}^k\sum_{j'=1}^k\int\mathbbm{1}(j=\tilde{g}^*(z',j'))\cdot|\tilde{q}(z\mid z')\tilde{q}_{t-1}(z',j')-\tilde{p}(z\mid z')\tilde{p}_{t-1}(z',j')|dz'dz \\
&=\sum_{j'=1}^k\int|\tilde{q}(z\mid z')\tilde{q}_{t-1}(z',j')-\tilde{p}(z\mid z')\tilde{p}_{t-1}(z',j')|dz'dz \\
&\le\sum_{j'=1}^k\int|\tilde{q}(z\mid z')-\tilde{p}(z\mid z')|\cdot \tilde{q}_{t-1}(z',j')+\tilde{p}(z\mid z')\cdot|\tilde{q}_{t-1}(z',j')-\tilde{p}_{t-1}(z',j')|dz'dz \\
&\le\sum_{j'=1}^k\int\alpha\cdot \tilde{q}_{t-1}(z',j')+|\tilde{q}_{t-1}(z',j')-\tilde{p}_{t-1}(z',j')|dz' \\
&\le\alpha+\|\tilde{q}_{t-1}-\tilde{p}_{t-1}\|_{\text{TV}}.
\end{align*}
Since $q_0(z,j)=p_0(z,j)$ for all $z\in\mathcal{Z}$ and $j\in[k]$, by induction, $\|\tilde{q}_t-\tilde{p}_t\|_{\text{TV}}\le t\alpha$. Thus, we have
\begin{align*}
\|\tilde{q}-\tilde{p}\|_{\text{TV}}\le\frac{1}{T}\sum_{t=1}^T\|\tilde{q}_t-\tilde{p}_t\|\le T\alpha,
\end{align*}
as claimed. $\qed$

\subsection{Proof of Lemma~\ref{lem:errbound}}
\label{sec:lem:errboundproof}

First, we prove the following lemma.
\begin{lemma}
\label{lem:pt}
We have
\begin{align*}
\tilde{p}_t(z_t,j_{t-1})=\sum_{j_1,...,j_{t-2}}^k\int\left(\prod_{\tau=1}^{t-1}\mathbbm{1}(j_\tau=\tilde{g}^*(z_\tau,j_{\tau-1}))\right)\cdot p(z_1,...,z_t)dz_1...dz_{t-1}.
\end{align*}
\end{lemma}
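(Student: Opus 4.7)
The plan is to prove the claim by induction on $t$, directly unrolling the recursive definition of $\tilde{p}_t(z,j)$ one step at a time and using the unigram factorization $p(z_1,\ldots,z_t)=\prod_{\tau=1}^{t}\tilde{p}(z_\tau\mid z_{\tau-1})$ (with the convention $\tilde{p}(z_1\mid z_0)=\tilde{p}(z_1)$ and $j_0=0$).

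For the base case $t=1$, the product on the right is empty (runs from $\tau=1$ to $0$), the sum over $j_1,\ldots,j_{-1}$ is vacuous, and there are no integration variables, so the right-hand side collapses to $p(z_1)=\tilde{p}(z_1)$. The left-hand side is $\tilde{p}_1(z_1,j_0)=\mathbbm{1}(j_0=0)\,\tilde{p}(z_1)$, and under the standing convention $j_0=0$ the two sides agree.

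For the inductive step, assume the formula for $t-1$. Starting from the recursive definition, I would relabel the dummy variables as $z'\rightsquigarrow z_{t-1}$ and $j'\rightsquigarrow j_{t-2}$:
\begin{align*}
\tilde{p}_t(z_t,j_{t-1})
=\sum_{j_{t-2}=1}^k\int\mathbbm{1}\bigl(j_{t-1}=\tilde{g}^*(z_{t-1},j_{t-2})\bigr)\,\tilde{p}(z_t\mid z_{t-1})\,\tilde{p}_{t-1}(z_{t-1},j_{t-2})\,dz_{t-1}.
\end{align*}
Substituting the inductive hypothesis for $\tilde{p}_{t-1}(z_{t-1},j_{t-2})$ pulls in an additional sum over $j_1,\ldots,j_{t-3}$, an additional integration over $z_1,\ldots,z_{t-2}$, and the partial product $\prod_{\tau=1}^{t-2}\mathbbm{1}(j_\tau=\tilde{g}^*(z_\tau,j_{\tau-1}))$ together with $p(z_1,\ldots,z_{t-1})$. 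The indicator from the recursion supplies exactly the missing $\tau=t-1$ factor, extending the product to $\prod_{\tau=1}^{t-1}$, while $\tilde{p}(z_t\mid z_{t-1})\cdot p(z_1,\ldots,z_{t-1})=p(z_1,\ldots,z_t)$ by the unigram factorization. Gathering the sums and integrals yields the claimed formula.

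The proof is essentially bookkeeping: there is no analytic obstacle, and the only subtlety is keeping the index alignment straight between the recursion (which advances $(z',j')\to(z,j)$) and the product notation (which indexes $j_\tau$ by the module produced at step $\tau$). The convention $j_0=0$ is what makes the base case match the empty-product form of the claim.
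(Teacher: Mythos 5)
Your proposal is correct and follows essentially the same route as the paper's proof: induction on $t$, unrolling the recursive definition of $\tilde{p}_t$, and absorbing $\tilde{p}(z_t\mid z_{t-1})$ into the unigram factorization of $p(z_1,\ldots,z_t)$. The only cosmetic difference is that you anchor the induction at $t=1$ (using the $j_0=0$ convention) while the paper starts at $t=2$; both are fine.
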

\begin{proof}
For the base case, we have
\begin{align*}
\tilde{p}_2(z_2,j_1)
&=\sum_{j_0=1}^k\int\mathbbm{1}(j_1=\tilde{g}^*(z_1,j_0))\cdot\tilde{p}(z_2\mid z_1)\cdot\tilde{p}_1(z_1,j_0)dz_1 \\
&=\sum_{j_0=1}^k\int\mathbbm{1}(j_1=\tilde{g}^*(z_1,j_0))\cdot\tilde{p}(z_2\mid z_1)\cdot\mathbbm{1}(j=0)\cdot\tilde{p}(z_1)dz_1 \\
&=\int\mathbbm{1}(j_1=\tilde{g}^*(z_1,j_0))\cdot p(z_1,z_2)dz_1,
\end{align*}
as claimed. For the inductive case, we have
\begin{align*}
\tilde{p}_t(z_t,j_{t-1})
&=\sum_{j_{t-2}=1}^k\int\mathbbm{1}(j_{t-1}=\tilde{g}^*(z_{t-1},j_{t-2}))\cdot\tilde{p}(z_t\mid z_{t-1})\cdot\tilde{p}_{t-1}(z_{t-1},j_{t-2})dz_{t-1} \\
&=\sum_{j_1,...,j_{t-2}=1}^k\int\left(\prod_{\tau=1}^{t-1}\mathbbm{1}(j_{\tau-1}=\tilde{g}^*(z_{\tau-1},j_{\tau-2}))\right)\cdot p(z_1,...,z_t)dz_1...dz_{t-1}.
\end{align*}
as claimed.
\end{proof}

Now, we prove Lemma~\ref{lem:errbound}. First, note that for each $t\in[T]$, we have
\begin{align*}
&\mathbb{P}_{p(w)}\left[(\hat{g}(w)_t\neq g^*(w)_t)\wedge\left(\bigwedge_{\tau=1}^{t-1}\hat{g}(w)_\tau=g^*(w)_\tau\right)\right] \\
&=\int\mathbbm{1}(\hat{g}(w)_t\neq g^*(w)_t)\cdot\left(\prod_{\tau=1}^{t-1}\mathbbm{1}(\hat{g}(w)_\tau=g^*(w)_\tau)\right)\cdot p(w)dw \\
&=\sum_{j_1,...,j_{t-1}=1}^k\int\mathbbm{1}(\hat{g}(w)_t\neq g^*(w)_t)\cdot\left(\prod_{\tau=1}^{t-1}\mathbbm{1}(\hat{g}(w)_\tau=g^*(w)_\tau)\right)\cdot p(j_1...j_{t-1}\mid w)\cdot p(w)dw \\
&=\sum_{j_1,...,j_{t-1}=1}^k\int\mathbbm{1}(\hat{g}(w)_t\neq g^*(w)_t)\cdot\cdot\left(\prod_{\tau=1}^{t-1}\mathbbm{1}(\hat{g}(w)_\tau=g^*(w)_\tau)\right)\cdot\left(\prod_{\tau=1}^{t-1}\mathbbm{1}(j_\tau=\tilde{g}^*(z_\tau,j_{\tau-1}))\right) \\
&\qquad\qquad\qquad\cdot p(w)dw \\
&=\sum_{j_1,...,j_{t-1}=1}^k\int\mathbbm{1}(\hat{\tilde{g}}(z_\tau,j_{\tau-1})\neq\tilde{g}^*(z_\tau,j_{\tau-1}))\cdot\left(\prod_{\tau=1}^{t-1}\mathbbm{1}(\hat{\tilde{g}}(z_\tau,j_{\tau-1})=\tilde{g}^*(z_\tau,j_{\tau-1}))\right) \\
&\qquad\qquad\qquad\cdot\left(\prod_{\tau=1}^{t-1}\mathbbm{1}(j_\tau=\tilde{g}^*(z_\tau,j_{\tau-1}))\right)\cdot p(w)dw \\
&\le\sum_{j_1,...,j_{t-1}=1}^k\int\mathbbm{1}(\hat{\tilde{g}}(z_\tau,j_{\tau-1})\neq\tilde{g}^*(z_\tau,j_{\tau-1}))\cdot\left(\prod_{\tau=1}^{t-1}\mathbbm{1}(j_\tau=\tilde{g}^*(z_\tau,j_{\tau-1}))\right)\cdot p(w)dw \\
&=\sum_{j_1,...,j_{t-1}=1}^k\int\mathbbm{1}(\hat{\tilde{g}}(z_\tau,j_{\tau-1})\neq\tilde{g}^*(z_\tau,j_{\tau-1}))\cdot\left(\prod_{\tau=1}^{t-1}\mathbbm{1}(j_\tau=\tilde{g}^*(z_\tau,j_{\tau-1}))\right)\cdot p(z_1,...,z_t)dz_1...dz_t \\
&=\mathbb{P}_{p_t(z,j)}\left[\hat{\tilde{g}}(z_\tau,j_{\tau-1})\neq\tilde{g}^*(z_\tau,j_{\tau-1})\right],
\end{align*}
where the last step follows from Lemma~\ref{lem:pt}. Now, note that
\begin{align*}
\mathbb{P}_{p(w)}[\hat{g}(w)\neq g^*(w)]
&=\sum_{t=1}^T\mathbb{P}_{p(w)}\left[(\hat{g}(w)_t\neq g^*(w)_t)\wedge\left(\bigwedge_{\tau=1}^{t-1}\hat{g}(w)_\tau=g^*(w)_\tau\right)\right] \\
&\le\sum_{t=1}^T\mathbb{P}_{\tilde{p}_t(z,j)}\left[\hat{\tilde{g}}(z,j)\neq\tilde{g}^*(z,j)\right] \\
&\le T\epsilon_g,
\end{align*}
as claimed. $\qed$

\subsection{Proof of Theorem~\ref{thm:modulegeneralization}}
\label{sec:thm:modulegeneralization:proof}

First, we show that $\mathbb{P}_{q(z,j)}[\hat{\tilde{g}}(z,j)\neq\tilde{g}^*(z,j)]\le\epsilon_g+T\alpha$. To this end, note that
\begin{align*}
&\mathbb{P}_{q(z,j)}[\hat{\tilde{g}}(z,j)\neq\tilde{g}^*(z,j)] \\
&=\mathbb{P}_{p(z,j)}[\hat{\tilde{g}}(z,j)\neq\tilde{g}^*(z,j)]+\mathbb{P}_{q(z,j)}[\hat{\tilde{g}}(z,j)\neq\tilde{g}^*(z,j)]-\mathbb{P}_{p(z,j)}[\hat{\tilde{g}}(z,j)\neq\tilde{g}^*(z,j)] \\
&\le\epsilon_g+\sum_{j=1}^k\int\mathbbm{1}(\hat{\tilde{g}}(z,j)\neq\tilde{g}^*(z,j))\cdot|\tilde{q}(z,j)-\tilde{p}(z,j)|dz \\
&\le\epsilon_g+\|\tilde{q}-\tilde{p}\|_{\text{TV}} \\
&\le\epsilon_g+T\alpha.
\end{align*}
Next, by Lemma~\ref{lem:errbound} with $q$ in place of $p$ and $\epsilon_g+T\alpha$ in place of $\epsilon_g$, we have $\mathbb{P}_{q(w)}[\hat{g}(w)\neq g^*(w)]\le T\epsilon_g+T^2\alpha$. Then, assuming that $\hat{g}(w)=g^*(w)$, we have
\begin{align*}
&\|f^*(x,w)-\hat{f}(x,w)\|_2 \\
&=\|(f^*_{j_T}\circ...\circ f^*_{j_1})(x)-(\hat{f}_{j_T}\circ...\circ\hat{f}_{j_1})(x)\|_2 \\
&\le\sum_{t=1}^T\|(f^*_{j_T}\circ...\circ f^*_{j_{t+1}}\circ f^*_{j_t}\circ \hat{f}_{j_{t-1}}\circ...\circ \hat{f}_{j_1})(x)-(f^*_{j_T}\circ...\circ f^*_{j_{t+1}}\circ\hat{f}_{j_t}\circ\hat{f}_{j_{t-1}}\circ...\circ\hat{f}_{j_1})(x)\|_2 \\
&\le\sum_{t=1}^T K^{T-t}\cdot\|(f^*_{j_t}\circ \hat{f}_{j_{t-1}}\circ...\circ \hat{f}_{j_1})(x)-(\hat{f}_{j_t}\circ\hat{f}_{j_{t-1}}\circ...\circ\hat{f}_{j_1})(x)\|_2 \\
&\le\sum_{t=1}^T K^{T-t}\epsilon_f \\
&\le T\epsilon_f\cdot\max\{K^{T-1},1\}.
\end{align*}
The claim follows by a union bound. $\qed$

\end{document}